\definecolor{blue(pigment)}{rgb}{0.2, 0.2, 0.6}
\definecolor{yaleblue}{rgb}{0.06, 0.3, 0.57}
\definecolor{mediumblue}{rgb}{0.0, 0.0, 0.8}
\definecolor{vegasgold}{rgb}{0.77, 0.7, 0.35}
\definecolor{myred}{rgb}{0.9333,0.8039,0.7922}
\newcommand{\mathcolorbox}[2]{\colorbox{#1}{$#2$}}
\newcommand{\squeeze}{}  
\renewcommand*{\backrefalt}[4]{%
    \ifcase #1 \footnotesize{(Not cited.)}%
    \or        \footnotesize{(Cited on page~#2)}%
    \else      \footnotesize{(Cited on pages~#2)}%
    \fi}
\newcommand{\PreserveBackslash}[1]{\let\temp=\\#1\let\\=\temp}
\newcolumntype{C}[1]{>{\PreserveBackslash\centering}p{#1}}
\newcolumntype{R}[1]{>{\PreserveBackslash\raggedleft}p{#1}}
\newcolumntype{L}[1]{>{\PreserveBackslash\raggedright}p{#1}}
\newcommand{\algname}[1]{{\sf#1}\xspace}
\newcommand{\cmark}{{\color{teal}\ding{51}}}
\newcommand{\xmark}{{\color{red}\ding{55}}}
\newcommand{\DCGD}{\algname{DCGD}}
\newcommand{\DIANA}{\algname{DIANA}}
\newcommand{\RDIANA}{\algname{Rand-DIANA}}
\newcommand{\GDCI}{\algname{GDCI}}
\newtheorem{definition}{Definition}
\newtheorem{theorem}{Theorem}
\newtheorem{lemma}{Lemma}
\newtheorem{assumption}{Assumption}
\DeclareMathOperator{\E}{\mathbf{E}}
\newcommand{\Exp}[1]{{\mathbf E}\left[#1\right]}
\DeclareMathOperator{\Var}{{\bf Var}}
\newcommand{\Rd}{\mathbb{R}^d}
\newcommand{\eps}{\varepsilon}
\def\<#1,#2>{\langle #1,#2\rangle}
\newcommand{\norm}[1]{\|#1\|}
\newcommand{\sqn}[1]{\norm{#1}^2}
\newcommand{\Norm}[1]{\left\|#1\right\|}
\newcommand{\sqN}[1]{\Norm{#1}^2}
\newcommand\br[1]{\left ( #1 \right )}
\newcommand\sbr[1]{\left[#1\right]}
\newcommand{\cC}{\mathcal{C}}
\newcommand{\cQ}{\mathcal{Q}}
\newcommand{\cO}{\mathcal{O}}
\newcommand{\cT}{\mathcal{T}}
\newcommand{\bR}{\mathbb{R}}
\newcommand{\bB}{\mathbb{B}}
\newcommand{\bU}{\mathbb{U}}
\newcommand{\eqdef}{\coloneqq}
\title{Shifted Compression Framework: Generalizations and Improvements}
\author[1]{\href{https://shulgin-egor.github.io}{Egor Shulgin}{}} 
\author[1]{Peter Richtárik}
\affil[1]{%
    King Abdullah University of Science and Technology (KAUST)\\
    Thuwal, Saudi Arabia
}
\begin{document}
\maketitle

\begin{abstract}
  Communication is one of the key bottlenecks in the distributed training of large-scale machine learning models, and lossy compression of exchanged information, such as stochastic gradients or models, is one of the most effective instruments to alleviate this issue. Among the most studied compression techniques is the class of unbiased compression operators with variance bounded by a multiple of the square norm of the vector we wish to compress. By design, this variance may remain high, and only diminishes if the input vector approaches zero. However, unless the model being trained is overparameterized, there is no a-priori reason for the vectors we wish to compress to approach zero during the iterations of classical methods such as distributed compressed {\sf SGD}, which has adverse effects on the convergence speed. Due to this issue, several more elaborate and seemingly very different algorithms have been proposed recently, with the goal of circumventing this issue. These methods are based on the idea of compressing the {\em difference} between the vector we would normally wish to compress and some auxiliary vector that changes throughout the iterative process. In this work we take a step back, and develop a unified framework for studying such methods, both conceptually and theoretically. Our framework incorporates methods compressing both gradients and models, using unbiased and biased compressors, and sheds light on the construction of the auxiliary vectors. Furthermore, our general framework can lead to the improvement of several existing algorithms, and can produce new algorithms. 
  Finally, we performed several numerical experiments to illustrate and support our theoretical findings.
\end{abstract}

\section{INTRODUCTION}\label{sec:intro}
We consider the distributed optimization problem
\begin{equation} \label{eq:problem}
    \min_{x \in \bR^d}~\sbr{f(x) \eqdef \frac{1}{n} \sum_{i=1}^n f_i(x)}, \tag{$\star$}
\end{equation}
where $n$ is the number of workers/clients and $f_i: \bR^d \to \bR$ is a smooth function representing the loss of the model parametrized by $x \in \bR^d$ for data stored on node $i$. This formulation 
has become very popular in recent years due to the increasing need for training large-scale machine learning models \citep{goyal2018accurate}.

\textbf{Communication bottleneck.} Compute nodes have to exchange information in a distributed learning process. The size of the sent messages (usually gradients or model updates) can be very large, which creates a significant bottleneck \citep{luo2018phub, peng2019generic, sapio2021switchml} to the whole training procedure. One of the main practical solutions to this problem is lossy \textit{communication} \textit{compression} \citep{seide20141, konecny2017federated, alistarh2017qsgd}.
It suggests applying a (possibly randomized) mapping $\cC$ to a vector/matrix/tensor $x$ before it is transmitted in order to produce a less accurate estimate $\cC(x): \Rd \to \Rd$ and thus save bits sent per every communication round.

\begin{table*}[!h]
\setlength{\tabcolsep}{2pt}
\caption{
Overview of results for methods obtained as special cases of our general framework \algname{DCGD-SHIFT} (Alg. \ref{alg:dcgd_shift}).
Iteration complexities are presented in $\tilde{\cO}$-notation to omit $\log 1/\eps$ factors and for the simplified case $\omega_i \equiv \omega, \delta_i \equiv \delta, L_i \equiv L$, $p_i \equiv p$. 
More refined statements are in theorems with links in the last column.
Complexities for \algname{DCGD-SHIFT} and \GDCI are shown in the interpolation regimes: $\nabla f_i(x^\star) = 0 = x^\star - \gamma \nabla f_i(x^\star)$.
}
\centering
\setlength{\tabcolsep}{5pt}
\begin{center}
{\def\arraystretch{1.5}
\begin{tabular}{ccccc}
\toprule
\textbf{Instance of \algname{DCGD-SHIFT}}  & \textbf{Shift}  & \textbf{Previous}  &  \textbf{Our result}   &  \textbf{Theorem} \\
\hline
\algname{DCGD-FIXED} (this work) & \eqref{eq:fixed_shift} & $-$ & $\kappa \br{1 + \frac{\omega}{n}}$ & \ref{thm:fixed_shift}\\
\algname{DCGD-STAR} (this work) & \eqref{eq:opt_shift} & $-$ & $\kappa \br{1 + \frac{\omega}{n}\br{1-\delta}}$ & \ref{thm:optimal_shift}\\
\DIANA \citep{mishchenko2019distributed} & \eqref{eq:diana_shift} & $\max\left\{\kappa\br{1 + \frac{\omega}{n}}, \omega \right\}$ & 
$\max\left\{\kappa\br{1 + \frac{\omega}{n} {\color{cyan}\br{1 - \delta}}}, \omega {\color{cyan}\br{1 - \delta}} \right\}$ & \ref{thm:diana_shift} \\
\RDIANA (this work) & \eqref{eq:r_diana_shift} & $-$ & 
    $\max \left\{\kappa\br{1 + \frac{\omega}{n}\br{1-\delta}}, \frac{1}{p}\right\}$ & \ref{thm:r_diana_shift} \\
\specialrule{0.2pt}{1pt}{1pt}
\GDCI \citep{khaled2019gradient} & \eqref{eq:distributed_gdci} & $\kappa^{\color{red}2} \br{1 + \frac{\omega}{n}}$ & $\kappa \br{1 + \frac{\omega}{n}}$ & \ref{thm:gdci} \\
\bottomrule
\end{tabular}
}
\end{center}\label{table:complexities}
\end{table*}

\textbf{Compression operators.}
The topic of gradient compression in distributed learning has been studied extensively over the last years from both practical \citep{xu2020compressed} and theoretical \citep{beznosikov2020biased, safaryan2021uncertainty, albasyoni2020optimal} approaches. Compression operators are typically divided into two large groups: \textit{unbiased} and \textit{biased} operators. The first group includes methods based on some sort of rounding or \textit{quantization}: Random Dithering \citep{goodall1951television, roberts1962picture}, Ternary quantization \citep{wen2017terngrad}, Natural \citep{horvath2019natural}, and Integer \citep{mishchenko2021intsgd} compression. Another popular example is random \textit{sparsification} -- \texttt{Rand-K} \citep{wangni2018gradient, stich2018sparsified, konevcny2018randomized}, which preserves only a subset of the original vector coordinates. These two approaches can also be combined \citep{basu2019qsparse} for even more aggressive compression. There are also many other approaches based on low-rank approximation \citep{vogels2002powergossip, wang2018atomo, safaryan2021fednl}, vector quantization \citep{gandikota2021vqsgd}, etc.  
The second group of biased compressors mainly includes greedy sparsification -- \texttt{Top-K} \citep{alistarh2018convergence, stich2018sparsified} and various sign-based quantization methods \citep{seide20141, bernstein2018signsgd, safaryan2021stochastic}. For a more complete review of compression operators, one can refer to the surveys by \citet{xu2020compressed} and \citet{beznosikov2020biased, safaryan2021uncertainty}.

\textbf{Optimization algorithms.}
Compression operators on their own are not sufficient for building a distributed learning system because they always go along with optimization algorithms. Distributed Compressed Gradient Descent (\DCGD) \citep{khirirat2018distributed} is one of the first theoretically analyzed methods which considered arbitrary unbiased compressors. The issue with \DCGD is that it was proven to converge linearly only to a neighborhood of the optimal point with constant step-size. \DIANA \citep{mishchenko2019distributed} fixed this problem by compressing specially designed gradient differences. Later \DIANA was generalized \citep{condat2021murana}, combined with variance reduction \citep{horvath2019stochastic}, accelerated \citep{li2020acceleration} in Nesterov's sense \citep{nesterov1983method} and by using smoothness matrices \citep{safaryan2021smoothness} with a properly designed sparsification technique.

On the other side are methods working with biased compressors, which require the use of the error-feedback ({\sf EF}) mechanism \citep{seide20141, alistarh2018convergence, stich2019error}. 
Such algorithms were often considered to be better in practice due to the smaller variance of biased updates \citep{beznosikov2020biased}. However, it was recently demonstrated that biased compressors can be incorporated into specially designed unbiased operators, and show superior to error-feedback results \citep{horvath2021better}. In addition, error-feedback was recently combined with the \DIANA trick \citep{gorbunov2020linearly}, which led to the first linearly converging method with {\sf EF}.
Later \cite{condat2022ef} proposed a unified framework for methods with biased and unbiased compressors.

\textbf{Compressed iterates.}
Most of the existing literature (including all methods described above) focuses on compression of the gradients, while in applications like Federated Learning \citep{mcmahan2017communication, konecny2017federated, kairouz2021advances}, it is vital to reduce the size of the broadcasted model parameters \citep{reisizadeh2020fedpaq}. This demand gives rise to optimization algorithms with compressed iterates. The first attempt to analyze such methods was done by \cite{khaled2019gradient} for  Gradient Descent with Compressed iterates (\GDCI) in a single node set up. Later \GDCI was combined with variance-reduction for noise introduced by compression and generalized to a much more general setting of distributed fixed-point methods \citep{chraibi2019distributed}.

\textbf{Summary of contributions.}
The obtained results are summarized in Table~\ref{table:complexities}, with the improvements over previous works highlighted. The main contributions include:

\textbf{1. Generalizations of existing methods.}
We introduce the concept of a \textit{Shifted Compressor}, which generalizes a common definition of compression operators used in distributed learning. This technique allows to study various strategies for updating the shifts using both biased and unbiased compressors, to recover and improve such previously known methods as \DCGD and \DIANA. Additionally, as a byproduct, a new algorithm is obtained: \algname{DCGD-STAR}, which achieves linear convergence to the exact solution if we know the local gradients at the optimum. 

\textbf{2. Improved rates.}
The notion of a shifted compressor allows us to revisit existing analysis of distributed methods with \textit{compressed iterates} and improve guarantees in both cases: with and without variance-reduction. Obtained results indicate that algorithms with model compression can have the same complexity as compressed gradient methods.

\textbf{3. New algorithm.}
We present a novel distributed algorithm with compression, called \algname{Randomized DIANA}, with linear convergence rate to the exact optimum. It has a significantly \textit{simpler analysis} than the original \DIANA method. Via examination of its experimental performance we highlight the cases when it can outperform \DIANA in practice.

\section{GENERAL FRAMEWORK}

In this section we introduce compression operators and the framework of shifted compressors.

\subsection{Standard Compression}

At first recall some basic definitions.

\begin{definition}[General contractive compressor]  \label{def:general_compressor}
A (possibly) randomized mapping $\cC: \bR^d \to \bR^d$ is a {\bf compression operator} ($\cC \in \bB(\delta)$ for brevity) if for some $\delta \in (0, 1]$ and $\forall x \in \bR^d$
\begin{equation*} \squeeze
    \E{\norm{\cC(x) - x}^2} \leq (1-\delta) \norm{x}^2,
\end{equation*}
where the expectation is taken w.r.t. (possible) randomness of operator $\cC$. 
\end{definition}

One of the most known operators from this class is \textit{greedy sparsification} (\texttt{Top-K} for $K \in \{1, \dots, d\}$):
\begin{equation*}
\squeeze
    \cC_{\texttt{Top-K}}(x) \eqdef \sum\limits_{i=d-K+1}^d x_{(i)} e_{(i)},
\end{equation*}
where coordinates are ordered by their magnitudes so that $|x_{(1)}| \leq |x_{(2)}| \leq \cdots \leq |x_{(d)}|$, and $e_1, \dots, e_d \in \Rd$ are the standard unit basis vectors. This compressor belongs to $\bB\br{K/d}$.

\begin{definition}[Unbiased compressor] \label{def:unbiased_compressor}
A randomized mapping $\cQ: \bR^d \to \bR^d$ is an {\bf unbiased compression operator} ($\cQ \in \bU(\omega)$ for brevity) if for some $\omega \geq 0$ and $\forall x \in \bR^d$
\begin{equation*}
\begin{aligned} \squeeze
    &(a) \E \cQ(x) = x, \hspace{58pt} \text{  (Unbiasedness)} \\ 
    &(b) \E\norm{\cQ(x) - x}^2 \leq \omega \norm{x}^2 \quad \text{ (Bounded variance)} 
\end{aligned}
\end{equation*}
The last inequality implies that 
\begin{equation} \label{eq:second_moment}
\squeeze
    \E \sqn{\cQ(x)} \leq (1 + \omega) \sqn{x}.
\end{equation}
\end{definition}
A notable example from this class is the \textit{random sparsification} (\texttt{Rand-K} for $K \in \{1, \dots, d\}$) operator:
\begin{equation} \label{eq:rand-k}
\squeeze
    \cQ_{\texttt{Rand-K}}(x) \eqdef \frac{d}{K} \sum\limits_{i \in S} x_i e_i,
\end{equation}
where $S$ is a random subset of $[d] \eqdef \{1, \dots, d\}$ sampled from the uniform distribution on the all subsets of $[d]$ with cardinality $K$. \texttt{Rand-K} belongs to $\bU\br{d/K-1}$.

Notice that property (a) from Definition~\ref{def:unbiased_compressor} is \say{uniform} across all vectors $x$, while property (b) is not. Namely, vector $x = 0$ is treated \emph{in a special way} because $\E \sqn{\cQ(0) - 0} = 0$,
which means that the compressed zero vector has \emph{zero variance}. In other words, zero is mapped to itself with probability 1.

\subsection{Compression with Shift}
We can generalize the class of unbiased compressors $\bU(\omega)$ to a class of operators with other (not only 0) \say{special} vectors. Specifically, this class allows for \textbf{shifts} away from the origin, which is formalized in the following definition.

\begin{definition}[Shifted compressor] \label{def:shifted_compressor}
A randomized mapping $\cQ_h: \bR^d \to \bR^d$ is a \textbf{shifted compression operator} ($\cQ_h \in \bU(\omega; h)$ in short) if exists $ \omega \geq 0$ such that $\forall x \in \bR^d$
\begin{equation*}
\begin{aligned}
\squeeze
    &(a) \E \cQ_h(x) = x \\
    &(b) \E\norm{\cQ_h(x) - x}^2 \leq \omega \norm{x - h}^2.
\end{aligned}
\end{equation*}
Vector $h \in \bR^d$ is called a \textbf{shift}. Note that class of unbiased compressors $\bU(\omega)$ is equivalent to $\bU(\omega; 0)$.
\end{definition}

The next lemma shows that shifts add up and all shifted compression operators $\cQ_h \in \bU(\omega; h)$ arise by a shift of some operator $\cQ_0$ from $\bU(\omega; 0)$.
    
\begin{lemma}[Shifting a shifted compressor] \label{lemma:shifting_shift}
    Let $\cQ_h \in \bU(\omega; h)$ and $v \in \bR^d$. Then the (possibly) randomized mapping $\cQ$ defined by
    \begin{equation*}
    \squeeze
        \cQ(x) \eqdef v + \cQ_h (x - v) 
    \end{equation*}
    satisfies $\cQ \in \bU(\omega; h + v)$.
\end{lemma}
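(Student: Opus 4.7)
The plan is to verify the two defining properties of a shifted compression operator (Definition~\ref{def:shifted_compressor}) directly for the candidate mapping $\cQ(x) \eqdef v + \cQ_h(x - v)$, taking $h + v$ as the alleged shift. Both checks reduce to a one-line substitution using the corresponding properties of $\cQ_h$, so I do not expect any real obstacle; the only thing to be careful about is making sure the algebra in the norm expression is rearranged so that the shift $h$ of $\cQ_h$ is applied to the argument $x - v$ rather than to $x$ itself.

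First I would verify unbiasedness. Linearity of expectation together with property (a) for $\cQ_h \in \bU(\omega; h)$ applied to the point $x - v \in \bR^d$ gives
\begin{equation*}
    \E \cQ(x) = v + \E \cQ_h(x - v) = v + (x - v) = x,
\end{equation*}
which is exactly (a) of Definition~\ref{def:shifted_compressor} for $\cQ$.

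Next I would check the shifted variance bound. The key observation is the cancellation
\begin{equation*}
    \cQ(x) - x = \bigl(v + \cQ_h(x-v)\bigr) - x = \cQ_h(x-v) - (x-v),
\end{equation*}
so that applying property (b) of $\cQ_h$ to the vector $x - v$ with shift $h$ yields
\begin{equation*}
    \E \norm{\cQ(x) - x}^2 = \E \norm{\cQ_h(x-v) - (x-v)}^2 \leq \omega \norm{(x-v) - h}^2 = \omega \norm{x - (h+v)}^2.
\end{equation*}
This is precisely property (b) of Definition~\ref{def:shifted_compressor} with shift $h + v$, so $\cQ \in \bU(\omega; h + v)$, as claimed. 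The step I would be most careful about is the re-grouping $(x - v) - h = x - (h + v)$, since it is exactly this identity that identifies the new shift and explains the additive structure announced in the lemma statement.
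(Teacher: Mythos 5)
Your proof is correct and follows essentially the same route as the paper's: unbiasedness by linearity of expectation, and the variance bound via the cancellation $\cQ(x) - x = \cQ_h(x-v) - (x-v)$ followed by applying property (b) of $\cQ_h$ at the point $x - v$ and regrouping $(x-v) - h = x - (h+v)$. No gaps; your explicit emphasis on the regrouping step is a nice touch but does not change the argument.
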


\emph{The shifted compressor} concept allows us to construct a
shifted compressed \textbf{gradient estimator} $\cQ_h \in \bU(\omega; h)$ given by
\begin{equation} \label{eq:shifted_gradient}
    \squeeze
    g_h(x) = \cQ_h\br{\nabla f(x)} = h + \cQ(\nabla f(x) - h),
\end{equation}
which is the main focus of this work. In particular, we are going to study different mechanisms for choosing this shift vector throughout the optimization process.

\textit{Note:} The estimator~\eqref{eq:shifted_gradient} is clearly unbiased, as soon as the operator $\cQ$ satisfies $\E\cQ(x) = x$. 

Estimator~\eqref{eq:shifted_gradient} uses operator $\cQ$ from class of unbiased compressors $\bU(\omega)$, which are usually easier to analyze but have higher empirical variance than their biased counterparts \citep{beznosikov2020biased}. 
In an attempt to kill two birds with one stone, we can incorporate the (possibly) biased compressor $\cC \in \bB(\delta)$ into $h$ using a similar shift trick:
\begin{equation} \label{eq:structured_shift}
\squeeze
    h = s + \cC(\nabla f(x) - s),
\end{equation}
as $g_h(x)$ allows for virtually any shift vector. This leads to the following estimator\footnote{The resulting estimator is related to induced compressor \citep{horvath2021better} $\cQ_{ind}(x) = \cC(x) + \cQ\br{x - \cC(x)}$, which belongs to the $\bU(\omega(1 - \delta))$ class
for $\cC \in \bB(\delta)$ and $\cQ \in \bU(\omega)$.}
\begin{equation}
\boxed{
\begin{aligned}
    g_h(x) &= h + \cQ\br{\nabla f(x) - h} \\
    &= s + \cC(\nabla f(x) - s) \\&\hspace{18pt} + \cQ\br{\nabla f(x) - s - \cC(\nabla f(x) - s)}.
\end{aligned}
}
\end{equation}

\subsection{The Meta-Algorithm}

Now we are ready to present the general distributed optimization algorithm for solving~\eqref{eq:problem} that employs shifted gradient estimators
\begin{equation*}
    g_h(x) = 
    \frac{1}{n} \sum_{i=1}^n g_{h_i}(x) =  
    \frac{1}{n} \sum_{i=1}^n \sbr{h_i + \cQ_i \br{\nabla f_i(x) - h_i}}.
\end{equation*}
\begin{algorithm}[H]
\begin{algorithmic}[1] \caption{Distributed Compressed Gradient
Descent with Shift (\algname{DCGD-SHIFT})} \label{alg:dcgd_shift}
    \State \textbf{Parameters:} learning rate $\gamma>0$; unbiased compressors $\cQ_1, \ldots, \cQ_n$; initial iterate $x^0 \in \bR^d$, initial local shifts $h_1^0, \ldots, h_n^0 \in \bR^d$ (stored on the $n$ nodes)
    \State \textbf{Initialize:} $h^0 = \frac{1}{n} \sum_{i=1}^n h_i^0$ (stored on the master)
    \For{$k = 0, 1, 2 \ldots$}
    \State Broadcast $x^k$ to all workers
    \For{$i = 1, \ldots n$} in parallel
    \State Compute local gradient: $\nabla f_i(x^k)$
    \State Compress: $m_i^k = \cQ_i(\nabla f_i(x^k) - h_i^k)$
    \State $\mathcolorbox{myred}{\text{Update the local shift: } h_i^{k+1}}$
    \State Send $m_i^k$ and/or (maybe) $h_i^{k+1}$ to the master 
    \EndFor
    \State Aggregate received messages: $m^k = \frac{1}{n} \sum_{i=1}^n m_i^k$
    \State Compute global estimator: $g^k = h^k + m^k$
    \State Take gradient descent step: $x^{k+1} = x^k - \gamma g^k$
    \State $\mathcolorbox{myred}{\text{Update aggregated shift: } h^{k+1} = \frac{1}{n} \sum_{i=1}^n h_i^{k+1}}$ \label{alg:dcgd_shift:line:master_shift}
    \EndFor
\end{algorithmic}
\end{algorithm}

In Algorithm~\ref{alg:dcgd_shift}, each worker $i=1, \dots, n$ queries the gradient oracle $\nabla f_i (x^k)$ in iteration $k$. Then, a compression operator is applied to the difference between the local gradient and shift, and the result is sent to the master (and also possibly the new shift). The shift is updated on both the server and workers. After receiving the messages $m_i^k$, a global gradient estimator $g^k$ is formed on the server, and a gradient step is performed.

Note that this method is not fully defined because it requires a description of the mechanism for updating the shifts $h_i^{k+1}$ (highlighted in \hl{color})
throughout the iteration process on both workers and master. In the next section, we illustrate how the shifts can be chosen and updated.

\section{CHOOSING THE SHIFTS} 
First, in Table~\ref{table:shifts}, we show the generality of our approach by presenting some of the existing and new distributed methods that fall into our framework of \algname{DCGD-SHIFT} with shift updates of the form~\eqref{eq:structured_shift}.

\begin{table*}[h]
\setlength{\tabcolsep}{8pt}
\caption{
List of existing and new algorithms that fit our general framework.
\textbf{VR} -- variance reduced method. $\mathcal{O}/ \mathcal{I}$ -- zero/identity operator, 
$\mathcal{B}_{p_i}$ -- Bernoulli\protect\footnotemark~compressor. 
\textsc{DGD} refers to Distributed Gradient Descent.}
\centering
{\def\arraystretch{1.5}
\begin{tabular}{ccc C{0.15\textwidth}C{0.15\textwidth}}
\toprule
& & & \multicolumn{2}{c}{\textbf{Shift} $h_i^{k+1} = s_i^k + \cC_i\br{\nabla f_i (x^k) - s_i^k}$} \\
\cmidrule(r){4-5}
\textbf{Method}  &  \textbf{Reference}  &  \textbf{VR}  &  $s_i^k$  &  $\cC_i$ \\
\hline
\textsc{DCGD} & \citep{khirirat2018distributed} & \xmark & $0$ & $\mathcal{O}$ \\
\textsc{DCGD-SHIFT} & (this work) & \xmark & $s_i^0$ & $\mathcal{O}$ \\
\textsc{DGD} & (folklore) & \cmark & $0$ & $\mathcal{I}$ \\
\textsc{DCGD-STAR} & (this work) & \cmark & $\nabla f_i(x^\star)$ & any $\cC_i \in \bB(\delta)$ \\
\textsc{DIANA} & \citep{mishchenko2019distributed} & \cmark & $h_i^k$ & $\alpha \cQ_i, \ \cQ_i \in \bU(\omega_i)$ \\
\textsc{Rand-DIANA} & (this work) & \cmark & $h_i^k$ & $\mathcal{B}_{p_i}$ \\
\textsc{GDCI} & \citep{chraibi2019distributed} & \xmark & $x^k / \gamma$ & $\mathcal{O}$ \\
\bottomrule
\end{tabular}
}
\label{table:shifts}
\end{table*}

The following assumptions are needed to analyze convergence and compare with previous results.

\begin{assumption}[Strong convexity]
Function $f: \bR^d \to \bR$ is $\mu$-strongly convex if
\begin{equation*}
    f(x) \geq f(y) + \left\langle\nabla f(y), x - y\right\rangle + \frac{\mu}{2} \sqn{x - y}, \ \forall x, y \in \Rd.
\end{equation*}
If $\mu = 0$, then the function is $\text{convex}$.
\end{assumption}

\begin{assumption}[Smoothness]
Function $f: \bR^d \to \bR$ is $L$-smooth if
\begin{equation*}
    f(x) \leq f(y) + \left\langle\nabla f(y), x - y\right\rangle + \frac{L}{2} \sqn{x - y}, \ \forall x, y \in \Rd.
\end{equation*}
\end{assumption}

Now, we can provide a general convergence guarantee for Algorithm \ref{alg:dcgd_shift}  with fixed shifts
\begin{equation} \label{eq:fixed_shift}
    h^k_i \equiv h_i.
\end{equation}

\begin{theorem}[\DCGD with \texttt{fixed\,\algname{SHIFT}}] \label{thm:fixed_shift}
    Assume each $f_i$ is convex and $L_i$-smooth, and $f$ is $L$-smooth and $\mu$-strongly convex. Let $\cQ_i \in \bU(\omega_i)$ be independent unbiased compression operators.
    If the step-size satisfies 
    \begin{equation*}
        \gamma \leq \frac{1}{L + 2\max_i\br{L_i \omega_i/n}},
    \end{equation*}
    then the iterates of Algorithm~\ref{alg:dcgd_shift} with fixed shifts $h_i^k \equiv h_i$ satisfy
    \begin{equation} \label{eq:fixed_shift_res}
    \begin{aligned}
        \E\sqN{x^k - x^\star} &\leq (1 - \gamma \mu)^k \sqn{x^0 - x^\star} \\ &\quad \ + 
        \frac{2 \gamma}{\mu} \frac{1}{n}\sum_{i=1}^n \frac{\omega_i}{n} \sqN{\nabla f_i (x^\star) - h_i}.
        \end{aligned}
    \end{equation}
\end{theorem}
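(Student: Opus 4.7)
The plan is to follow the standard one-step recursion argument for compressed gradient methods, with the main technical work being a careful variance bound for the shifted estimator.

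First, I would unroll one step of the algorithm and write
\begin{equation*}
\sqN{x^{k+1} - x^\star} = \sqN{x^k - x^\star} - 2\gamma \Dotprod{g^k, x^k - x^\star} + \gamma^2 \sqN{g^k}.
\end{equation*}
Taking conditional expectation given $x^k$, the unbiasedness $\E[\cQ_i(\cdot)] = \text{id}$ combined with the fixed shifts gives $\E[g^k \mid x^k] = \nabla f(x^k)$, which converts the cross term into $-2\gamma \Dotprod{\nabla f(x^k), x^k - x^\star}$. Applying $\mu$-strong convexity of $f$ in the form $\Dotprod{\nabla f(x^k), x^k - x^\star} \geq f(x^k) - f(x^\star) + \tfrac{\mu}{2}\sqN{x^k - x^\star}$ yields the contraction factor $(1-\gamma\mu)$ on $\sqN{x^k - x^\star}$ and a $-2\gamma(f(x^k) - f(x^\star))$ term to be absorbed by the second moment bound.

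The key step is bounding $\E[\sqN{g^k} \mid x^k]$. Using independence of the $\cQ_i$ and their bounded variance, a bias-variance decomposition gives
\begin{equation*}
\E\sqN{g^k} \leq \sqN{\nabla f(x^k)} + \frac{1}{n^2}\sum_{i=1}^n \omega_i \sqN{\nabla f_i(x^k) - h_i}.
\end{equation*}
To introduce the shift error, I would apply Young's inequality $\sqN{\nabla f_i(x^k) - h_i} \leq 2\sqN{\nabla f_i(x^k) - \nabla f_i(x^\star)} + 2\sqN{\nabla f_i(x^\star) - h_i}$. The convexity and $L_i$-smoothness of each $f_i$ then give $\sqN{\nabla f_i(x^k) - \nabla f_i(x^\star)} \leq 2L_i\, D_{f_i}(x^k, x^\star)$, where $D_{f_i}$ is the Bregman divergence; summing and using $\nabla f(x^\star) = 0$ yields $\frac{1}{n}\sum_i D_{f_i}(x^k, x^\star) = f(x^k) - f(x^\star)$. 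The $L$-smoothness of $f$ similarly gives $\sqN{\nabla f(x^k)} \leq 2L(f(x^k) - f(x^\star))$. Combining these produces
\begin{equation*}
\E\sqN{g^k} \leq 2\br{L + \frac{2\max_i L_i \omega_i}{n}}(f(x^k) - f(x^\star)) + \frac{2}{n^2}\sum_{i=1}^n \omega_i \sqN{\nabla f_i(x^\star) - h_i}.
\end{equation*}

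Plugging this back in and using the stated step-size bound $\gamma \leq 1/(L + 2\max_i L_i \omega_i/n)$ makes the coefficient of $(f(x^k) - f(x^\star))$ non-positive, so those function-value terms cancel. What remains is
\begin{equation*}
\E\sqN{x^{k+1} - x^\star} \leq (1-\gamma\mu)\sqN{x^k - x^\star} + \frac{2\gamma^2}{n^2}\sum_{i=1}^n \omega_i \sqN{\nabla f_i(x^\star) - h_i}.
\end{equation*}
Taking total expectation, unrolling, and bounding the geometric series $\sum_{j=0}^{k-1}(1-\gamma\mu)^j \leq 1/(\gamma\mu)$ produces exactly~\eqref{eq:fixed_shift_res}. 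The main obstacle is the second-moment bound — specifically, combining the compressor variances with individual smoothness/convexity to produce a clean expression of the form $\text{(const)} \cdot (f(x^k) - f(x^\star)) + \text{shift-noise}$ that the step-size condition can absorb; all other steps are routine.
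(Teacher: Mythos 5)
Your proposal is correct and follows essentially the same route as the paper: the heart of both arguments is the identical variance bound $\E\sqn{g^k - \nabla f(x^k)} \le \frac{1}{n^2}\sum_i \omega_i \sqn{\nabla f_i(x^k) - h_i}$ (via independence and unbiasedness), followed by Young's inequality and the Bregman-divergence smoothness bound $\sqn{\nabla f_i(x^k) - \nabla f_i(x^\star)} \le 2L_i D_{f_i}(x^k, x^\star)$ to split the bound into a $D_f(x^k,x^\star)$ term absorbed by the step-size and a residual shift-noise term. The only cosmetic difference is that you carry out the one-step descent recursion and geometric-series unrolling explicitly, whereas the paper stops at the expected-smoothness bound and delegates that final step to an external theorem (Theorem 4.1 of the cited unified-analysis paper); your explicit version is exactly what that theorem does.
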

This theorem establishes a linear convergence rate up to a certain oscillation radius, controlled by the average distance of shift vectors $h_i$ to the optimal local gradients $\nabla f_i(x^\star)$ multiplied by the step-size $\gamma$. This means that in the interpolation/\textbf{overparameterized regime} ($\nabla f_i(x^\star) = 0$ for all $i$), method reaches \textbf{exact solution} with zero shifts $h_i^0 = 0$.

In the following subsections, we study how the shifts can be formed to guarantee linear convergence to the exact optimum. We start by introducing practically useless, but theoretically insightful \algname{DCGD-STAR}, and then move onto implementable algorithms that learn the optimal shifts.

\subsection{Optimal Shifts}
Assume, for the sake of argument, that we know the values $\nabla f_i (x^\star)$ for every $i \in [n]$. Then, we can construct optimally shifted compressed shift updates sequence using the form~\eqref{eq:structured_shift}
\begin{equation} \label{eq:opt_shift}
    h_i^{k+1} = \nabla f_i (x^\star) + \cC_i(\nabla f_i (x^k) - \nabla f_i (x^\star)).
\end{equation}
This is enough to fully characterize the Algorithm~\ref{alg:dcgd_shift} and obtain the following convergence guarantee:
\begin{theorem}[\algname{DCGD-STAR}] \label{thm:optimal_shift}
    Assume each $f_i$ is convex and $L_i$-smooth, and $f$ is $L$-smooth and $\mu$-strongly convex. Let $\cQ_i \in \bU(\omega_i), \cC_i \in \bU(\delta_i)$ be independent compression operators.
    If the step-size satisfies 
    \begin{equation} \label{eq:stepsize_optimal}
        \gamma \leq \frac{1}{L + \max_i\br{L_i \omega_i(1 - \delta_i)/n}},
    \end{equation}
    then the iterates of \DCGD with \textbf{optimally shifted compressed shift} update~\eqref{eq:opt_shift} satisfy
    \begin{equation*} 
        \E\sqN{x^k - x^\star} \leq (1 - \gamma \mu)^k \sqn{x^0 - x^\star}.
    \end{equation*}
\end{theorem}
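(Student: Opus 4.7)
The plan is to follow the standard one-step contraction template for stochastic gradient methods, while controlling $\E\|g^k - \nabla f(x^k)\|^2$ by exploiting the two layers of compression (via $\cQ_i$ and $\cC_i$). First I would expand $\|x^{k+1} - x^\star\|^2 = \|x^k - x^\star\|^2 - 2\gamma\Dotprod{g^k, x^k - x^\star} + \gamma^2\|g^k\|^2$ and take expectation. Since $\E g^k = \nabla f(x^k)$ regardless of the shifts (the unbiasedness note after \eqref{eq:shifted_gradient}), the cross term becomes $-2\gamma\Dotprod{\nabla f(x^k), x^k - x^\star}$, which $\mu$-strong convexity bounds by $-2\gamma(f(x^k) - f(x^\star)) - \gamma\mu\|x^k - x^\star\|^2$. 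The quadratic splits as $\|\nabla f(x^k)\|^2 + \E\|g^k - \nabla f(x^k)\|^2$, and $L$-smoothness of $f$ bounds the first summand by $2L(f(x^k) - f(x^\star))$.

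The crux is the variance bound. Conditioning on the $h_i^k$'s and using independence of the $\cQ_i$'s together with their $\bU(\omega_i)$ property gives
\begin{equation*}
\E\|g^k - \nabla f(x^k)\|^2 \leq \frac{1}{n^2}\sum_{i=1}^n \omega_i\, \E\bigl\|\nabla f_i(x^k) - h_i^k\bigr\|^2.
\end{equation*}
Plugging in the optimal shift \eqref{eq:opt_shift} and using the contractive property of $\cC_i$ with parameter $\delta_i$ yields $\E\|\nabla f_i(x^k) - h_i^k\|^2 \leq (1-\delta_i)\|\nabla f_i(x^k) - \nabla f_i(x^\star)\|^2$. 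The standard consequence of convexity and $L_i$-smoothness of $f_i$ is $\|\nabla f_i(x^k) - \nabla f_i(x^\star)\|^2 \leq 2L_i\bigl[f_i(x^k) - f_i(x^\star) - \Dotprod{\nabla f_i(x^\star), x^k - x^\star}\bigr]$. Averaging over $i$, pulling out $\max_i L_i\omega_i(1-\delta_i)$, and using $\nabla f(x^\star) = 0$ collapses the right-hand side to $\tfrac{2}{n}\max_i L_i\omega_i(1-\delta_i)(f(x^k) - f(x^\star))$.

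Assembling these pieces produces the recursion
\begin{equation*}
\E\|x^{k+1} - x^\star\|^2 \leq (1-\gamma\mu)\|x^k - x^\star\|^2 - 2\gamma\Bigl(1 - \gamma L - \gamma\max_i L_i\omega_i(1-\delta_i)/n\Bigr)(f(x^k) - f(x^\star)).
\end{equation*}
The step-size condition \eqref{eq:stepsize_optimal} is exactly what makes the coefficient of $f(x^k) - f(x^\star)$ nonpositive, so that term can be dropped and unrolling the contraction yields $\E\|x^k - x^\star\|^2 \leq (1-\gamma\mu)^k\|x^0 - x^\star\|^2$. I expect the main obstacle to be not conceptual but bookkeeping: one must read the optimal shift \eqref{eq:opt_shift} as producing a random $h$ satisfying $\E\|\nabla f_i(x^k) - h\|^2 \leq (1-\delta_i)\|\nabla f_i(x^k) - \nabla f_i(x^\star)\|^2$, and then apply the tower property to integrate the $\cC_i$- and $\cQ_i$-randomness in the correct order so that the chain of bounds above is tight.
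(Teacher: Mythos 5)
Your proposal is correct and follows essentially the same route as the paper's proof: the same variance decomposition of $\E\sqn{g^k}$, the same two-stage bound on the variance (independence of the $\cQ_i$ giving $\tfrac{1}{n^2}\sum_i \omega_i \E\sqn{\nabla f_i(x^k)-h_i^k}$, then the contractive property of $\cC_i$ giving the factor $1-\delta_i$, then $L_i$-smoothness giving $2L_iD_{f_i}(x^k,x^\star)$), and the same use of strong convexity and the step-size condition to drop the Bregman term. The only cosmetic difference is that you write $f(x^k)-f(x^\star)$ where the paper keeps $D_f(x^k,x^\star)$; these coincide since $\nabla f(x^\star)=0$.
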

This is the first presented algorithm with linear convergence to the exact solution for the general \emph{not-overparameterized case}.
Notice that for zero-identity operators $\cC_i \equiv 0$ we obtain the simplest optimal shift $h_i = \nabla f_i (x^\star)$ and the term $\delta_i$ in~\eqref{eq:stepsize_optimal} should be interpreted as zero.

The issue with the described method is that, in general, we do not know the values $h_i^\star \eqdef \nabla f_i(x^\star)$ (unless the problem is overparametrized), which makes method impractical.

\footnotetext{$\mathcal{B}_p (x) \eqdef \left\{\begin{array}{ll} x & \text { with probability } p \\ 0 & \text { with probability } 1-p\end{array}\right.$}

\subsection{Learning the Optimal Shifts} \label{section:learn_shift} 
We need to design the sequences $\{h_1^k\}_{k\geq0}, \dots, \{h_n^k\}_{k\geq0}$ in such a way that they all converge to the optimal shifts:
\begin{equation*}
\squeeze
    h_i^k \to \nabla f_i (x^\star) \quad \text{ as } \quad  k \to \infty.
\end{equation*}
However, at the same time, we do not want to send uncompressed vectors from workers to the master. So, the challenge is not only learning the shifts, but doing so in a communication-efficient way. We present two different solutions to this problem in this work.

\subsubsection{\algname{DIANA}-like Trick}
Our first approach is based on the celebrated \DIANA \citep{mishchenko2019distributed, horvath2019stochastic} algorithm:
\begin{equation} \label{eq:diana_shift}
\begin{aligned}
\squeeze
    h_i^{k+1} &= h_i^k + \alpha \big[\cC_i(\nabla f_i(x^k) - h_i^k) \\& \ + 
    \cQ_i\br{\nabla f_i(x^k) - h_i^k - \cC_i(\nabla f_i(x^k) - h_i^k)}\big],
\end{aligned}
\end{equation}
where $\alpha$ is a suitably chosen step-size. For $\cC_i \equiv 0$, it takes the simplified form
\begin{equation} \label{eq:diana_shift_2}
\squeeze
    h_i^{k+1} = h_i^k + \alpha \cQ_i\br{\nabla f_i(x^k) - h_i^k}.
\end{equation}
This recursion resolves both of the raised issues earlier.
Firstly, this sequence of $h_i^k$ indeed converges to the optimal shifts $\nabla f_i(x^\star)$, which is formalized in the Theorem~\ref{thm:diana_shift} presented later. Moreover, the shift on the master $h^{k+1} = 
    \frac{1}{n} \sum_{i=1}^n h_i^{k+1}$ is updated as follows:
\begin{equation*} 
\begin{aligned} 
    h^{k+1} &= 
    \frac{1}{n} \sum_{i=1}^n \Big\{h_i^k + \alpha \big[\cC_i(\nabla f_i(x^k) - h_i^k) \\&\qquad\ + 
    \cQ_i\br{\nabla f_i(x^k) - h_i^k - \cC_i(\nabla f_i(x^k) - h_i^k)} \big]\Big\} 
    \\&=
    \frac{1}{n} \sum_{i=1}^n h_i^k + \alpha \frac{1}{n} \sum_{i=1}^n \left\{c_i^k + m_i^k\right\} \\&= 
    h^k + \alpha \br{c^k + m^k},
\end{aligned}
\end{equation*}
which requires aggregation of the compressed vectors $c_i^k \eqdef \cC_i(\nabla f_i(x^k) - h_i^k)$ and $m_i^k \eqdef \cQ_i\br{\nabla f_i(x^k) - h_i^k - c_i^k}$ from the workers. In the case of update~\eqref{eq:diana_shift_2}, it is not even needed to send anything in addition to the messages $m_i^k$ required by default in Algorithm~\ref{alg:dcgd_shift}. 

Furthermore, simplified recursion \eqref{eq:diana_shift_2} can be interpreted as one step of Compressed Gradient Descent (\algname{CGD}) with step-size $\alpha$ 
applied to such optimization problem:
\begin{equation*} 
    \max_{h_{i} \in \Rd} \left[ \phi_{i}^{k}\left(h_{i}\right) \eqdef -\frac{1}{2}\left\|h_{i}-\nabla f_{i}(x^{k})\right\|^{2} \right],
\end{equation*}
which is in fact a 1-smooth and 1-strongly concave function. In this way, $h_i^{k+1}$ keeps track of the latest local gradient and produces a better estimate than the previous shift $h_i^k$.

Now we present the convergence result for the Algorithm~\ref{alg:dcgd_shift} with described before shift learning procedure.
\begin{theorem}[Generalized \DIANA] \label{thm:diana_shift}
    Assume each $f_i$ is convex and $L_i$-smooth, and $f$ is $\mu$-strongly convex. Let $\cQ_i \in \bU(\omega_i), \cC_i \in \bU(\delta_i)$ be independent compression operators.
    If the step-sizes for all $i$ satisfy
    \begin{equation*}
    \begin{aligned}
        \alpha &\leq \frac{1}{1+\omega_i(1 - \delta_i)}
        ,\\
        \gamma &\leq \frac{1}{\frac{2}{n} \max_i\br{\omega_i L_i} + (1 + \alpha M) L_{\max}},
    \end{aligned}
    \end{equation*}
    where $L_{\max} \eqdef \max_i L_i, M > 2/(n \alpha)$ and $\delta_i$ should be interpreted as zero for $\cC_i \equiv 0$,
    then the iterates of \DCGD with the \DIANA-like shift update~\eqref{eq:diana_shift} satisfy
    \begin{equation*}
    \E V^k \leq \max \left\{(1-\gamma \mu)^k, \left(1 - \alpha + \frac{2\omega}{n M}\right)^k\right\} V^0,
    \end{equation*}
    where the Lyapunov function $V^k$ is defined by
    \begin{equation*}
    \textstyle
        V^k \eqdef \left\|x^k - x^\star\right\|^2 + M \gamma^2 \cdot \frac{1}{n}  \sum\limits_{i=1}^{n} \omega_i \left\|h_i^k - \nabla f_i (x^\star)\right\|^2.
    \end{equation*}
\end{theorem}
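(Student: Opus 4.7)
The plan is to prove Theorem~\ref{thm:diana_shift} by a Lyapunov argument that couples (i) a descent inequality for $\sqn{x^k - x^\star}$ with (ii) a per-worker contraction for the shift error $\sqn{h_i^k - \nabla f_i(x^\star)}$. The iterate descent will produce a positive remainder proportional to the shift error, which is then matched against the contraction of the second term in $V^k$ through the weight $M\gamma^2$. The two resulting rate factors $(1-\gamma\mu)$ and $\br{1-\alpha+\tfrac{2\omega}{nM}}$ then give the advertised bound after taking $\max$.

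\textbf{Step 1: iterate descent.} Starting from the identity $\E_k \sqn{x^{k+1} - x^\star} = \sqn{x^k - x^\star} - 2\gamma \langle \nabla f(x^k), x^k - x^\star \rangle + \gamma^2 \E_k \sqn{g^k}$, which uses the unbiasedness $\E_k g^k = \nabla f(x^k)$, I would invoke $\mu$-strong convexity of $f$ on the inner-product term and conditional independence of the workers on the second-moment term to get $\E_k \sqn{g^k} \leq \sqn{\nabla f(x^k)} + \tfrac{1}{n^2} \sum_i \omega_i \sqn{\nabla f_i(x^k) - h_i^k}$. Convexity and $L_i$-smoothness of each $f_i$ then give both $\sqn{\nabla f(x^k)} \leq 2 L_{\max}(f(x^k) - f(x^\star))$ and $\sqn{\nabla f_i(x^k) - \nabla f_i(x^\star)} \leq 2 L_i \sbr{f_i(x^k) - f_i(x^\star) - \langle \nabla f_i(x^\star), x^k - x^\star\rangle}$, and the split $\sqn{\nabla f_i(x^k) - h_i^k} \leq 2 \sqn{\nabla f_i(x^k) - \nabla f_i(x^\star)} + 2 \sqn{h_i^k - \nabla f_i(x^\star)}$ separates the variance into a functional-suboptimality piece (to be absorbed by the step-size hypothesis) and a shift-error piece (to be matched in Step~3).

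\textbf{Step 2: shift contraction.} For each fixed $i$, the bracket in~\eqref{eq:diana_shift} is exactly the induced-compressor image $w_i^k \eqdef \cC_i(v_i^k) + \cQ_i(v_i^k - \cC_i(v_i^k))$ of $v_i^k \eqdef \nabla f_i(x^k) - h_i^k$. A tower-expectation argument (conditioning first on $\cC_i$ and then on $\cQ_i$) gives $\E_k w_i^k = v_i^k$ and $\E_k \sqn{w_i^k - v_i^k} \leq \omega_i(1-\delta_i) \sqn{v_i^k}$. Expanding $\sqn{h_i^{k+1} - \nabla f_i(x^\star)}$ in terms of $h_i^k - \nabla f_i(x^\star)$ and $\alpha w_i^k$, taking conditional expectation, and using $\alpha^2 \E_k \sqn{w_i^k} \leq \alpha^2 \br{1 + \omega_i(1-\delta_i)} \sqn{v_i^k}$ together with the hypothesis $\alpha \leq 1/(1+\omega_i(1-\delta_i))$ collapses the cross terms and yields the clean contraction $\E_k \sqn{h_i^{k+1} - \nabla f_i(x^\star)} \leq (1-\alpha) \sqn{h_i^k - \nabla f_i(x^\star)} + \alpha \sqn{\nabla f_i(x^k) - \nabla f_i(x^\star)}$.

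\textbf{Step 3: assembling $V^k$ and the main obstacle.} Multiplying Step~2 by $M\gamma^2 \omega_i/n$, averaging over $i$, and adding the descent from Step~1 gives, after collecting coefficients, an inequality of the form $\E_k V^{k+1} \leq (1-\gamma\mu) \sqn{x^k - x^\star} + \br{1 - \alpha + \tfrac{2\omega}{nM}} \cdot M\gamma^2 \cdot \tfrac{1}{n}\sum_i \omega_i \sqn{h_i^k - \nabla f_i(x^\star)} - \Psi(x^k)$, where $\Psi(x^k) \geq 0$ exactly when $\gamma \br{\tfrac{2}{n}\max_i(\omega_i L_i) + (1 + \alpha M) L_{\max}} \leq 1$, which is the hypothesized step-size bound. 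Dropping $-\Psi(x^k) \leq 0$, taking total expectation, and unrolling in $k$ finishes the proof. The delicate part — and the main obstacle — is this bookkeeping: the $\alpha L_i$ term produced by Step~2 must be absorbed into the descent inequality (which is exactly what forces the $\alpha M L_{\max}$ contribution to the step-size denominator), while the leftover variance $\tfrac{2\gamma^2}{n^2}\sum_i \omega_i \sqn{h_i^k - \nabla f_i(x^\star)}$ from Step~1 must be matched with the shift-Lyapunov coefficient, which is what forces the side condition $M > 2/(n\alpha)$ so that $1 - \alpha + \tfrac{2\omega}{nM} < 1$. The per-$i$ smoothness and compression parameters make uniform shortcuts unsafe, so these coefficients need to be tracked index-by-index until the final $\max_i$/averaging step.
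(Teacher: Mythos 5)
Your proposal is correct and follows essentially the same route as the paper: the same variance decomposition of the gradient estimator into a $D_f(x^k,x^\star)$ term plus the weighted shift error $\sigma^k$, and the same $\sigma^k$-contraction obtained by recognizing the bracket in~\eqref{eq:diana_shift} as an induced compressor in $\bU(\omega_i(1-\delta_i))$ and using $\alpha \leq 1/(1+\omega_i(1-\delta_i))$ to collapse the cross terms. The only cosmetic difference is that the paper packages your Step 3 as an appeal to a generic Lyapunov theorem (Theorem 4.1 of the $\sigma^k$ framework), whereas you carry out that assembly explicitly; the bookkeeping you describe is exactly what that theorem does.
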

Our result represents an improvement over the original \DIANA in several ways. Firstly, we use a much more general shift updates involving $\cC_i$, which allow biased operators to be used for learning the optimal shifts. Secondly, one can use different compressors $\cQ_i$, which can be particularly beneficial when different workers have various bandwidths/connection speeds to the master. Thus, the slower workers can compress more, and therefore use operators with higher $\omega_i$. At the same, time the opposite makes sense for \say{faster} workers.

\subsubsection{Randomized DIANA (\algname{Rand-DIANA})}

Recalling the original issue stated in Section~\ref{section:learn_shift} that we are dealing with:
\begin{equation*}
    \text{\textbf{design sequences} } \{h_i^k\}_{k\geq0} \ \text{ such that } \ h_i^k \to \nabla f_i (x^\star). 
\end{equation*}
The simplest possible solution would be just to set $h_i^k$ to $\nabla f_i(x^k)$ because if $x^k \to x^\star$ in the optimization process, then $\nabla f_i(x^k)$ converges to the optimal local shift. However, this approach is not efficient, as workers have to transfer full (uncompressed) vectors $h_i^k = \nabla f_i(x^k)$. Our alternative to the \DIANA solution is to update a reference point $w_i^k$ for calculating the shift $h_i^k = \nabla f_i(w_i^k)$ infrequently (with a small probability $p_i \in (0, 1]$), so that $h_i^k$ needs to be communicated very rarely:
\begin{equation} \label{eq:r_diana_shift}
\begin{aligned} 
    h_i^k &= \nabla f_i (w_i^k) \\
    w_i^{k+1} &= \left\{\begin{array}{ll}
                    x^k \, \text { with probability } p_i \\ 
                    w_i^k \text { with probability } 1 - p_i
                  \end{array}\right.
\end{aligned}
\end{equation}

This method has a remarkably simpler analysis than \DIANA, but can solve the original problem of eliminating the variance introduced by gradient compression. Next, we state the convergence result for \DCGD with shifts updated in a \texttt{randomized} fashion~\eqref{eq:r_diana_shift}. We named it \algname{Randomized-DIANA} (\RDIANA in short) to acknowledge the original method \citep{mishchenko2019distributed} to first solve this problem.

\begin{theorem}[\algname{Rand-DIANA}] \label{thm:r_diana_shift}
Assume that $f_i$ are convex, $L_i$-smooth for all $i$ and $f$ is $\mu$-convex. If the step-size satisfies
\begin{equation*}
    \gamma \leq \frac{1}{\left(1+\frac{2 \omega}{n}\right) L_{\max} + M \max_i(p_i L_i)},
\end{equation*}
where $M > \frac{2 \omega}{n p_{m}}$ and $p_{m} \eqdef \min_i p_i$.
Then, the iterates of \DCGD with \algname{Randomized-DIANA} shift update~\eqref{eq:r_diana_shift} satisfy
\begin{equation*}
    \E V^k \leq \max \left\{(1-\gamma \mu)^k, \left(1 - p_{m} + \frac{2\omega}{nM}\right)^k\right\} V^0,
\end{equation*}
where the Lyapunov function $V^k$ is defined by
\begin{equation*}
    V^k \eqdef \left\|x^k - x^\star\right\|^2 + M \gamma^2 \cdot \frac{1}{n}  \sum\limits_{i=1}^{n} \left\|h_i^k - \nabla f_i (x^\star)\right\|^2.
\end{equation*}
\end{theorem}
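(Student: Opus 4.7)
The plan is to bound the two parts of the Lyapunov function $V^k$ separately, combine them, and then tune $M$ and $\gamma$ so that both terms contract at the advertised rate.

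First, I would perform the standard one-step descent analysis on $\|x^{k+1}-x^\star\|^2$. Since the compressors $\cQ_i$ are independent and unbiased, the global estimator $g^k$ satisfies $\E[g^k\mid x^k,\{h_i^k\}] = \nabla f(x^k)$, so the usual expansion gives
\begin{equation*}
\E\sqN{x^{k+1}-x^\star} = \sqN{x^k-x^\star} - 2\gamma\langle \nabla f(x^k), x^k-x^\star\rangle + \gamma^2 \E\sqN{g^k}.
\end{equation*}
For the second moment, independence across workers and the unbiased-compressor variance bound yield
\begin{equation*}
\E\sqN{g^k} \le \sqN{\nabla f(x^k)} + \frac{1}{n^2}\sum_{i=1}^n \omega_i\sqN{\nabla f_i(x^k)-h_i^k}.
\end{equation*}
Splitting $\nabla f_i(x^k)-h_i^k = (\nabla f_i(x^k)-\nabla f_i(x^\star)) + (\nabla f_i(x^\star)-h_i^k)$ with the parallelogram inequality, and using $L_i$-smoothness plus convexity via the co-coercivity bound $\sqN{\nabla f_i(x^k)-\nabla f_i(x^\star)} \le 2L_i D_{f_i}(x^k,x^\star)$ (summed to $\sqN{\nabla f(x^k)}\le 2L_{\max} D_f(x^k,x^\star)$ using $\frac{1}{n}\sum_i \nabla f_i(x^\star)=0$), this converts the second-moment term into a Bregman-divergence term plus a shift-error term. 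Strong convexity then absorbs the cross term via $-2\gamma\langle\nabla f(x^k),x^k-x^\star\rangle \le -2\gamma D_f(x^k,x^\star) - \gamma\mu\sqN{x^k-x^\star}$.

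Second, I would handle the shift part of the Lyapunov function. By definition of the randomized update~\eqref{eq:r_diana_shift}, conditioning on the past gives
\begin{equation*}
\E\sqN{h_i^{k+1}-\nabla f_i(x^\star)} = p_i\sqN{\nabla f_i(x^k)-\nabla f_i(x^\star)} + (1-p_i)\sqN{h_i^k-\nabla f_i(x^\star)}.
\end{equation*}
Multiplying by $M\gamma^2/n$, summing over $i$, and again invoking $L_i$-smoothness/convexity converts the $p_i\sqN{\nabla f_i(x^k)-\nabla f_i(x^\star)}$ term into $2 p_i L_i D_{f_i}(x^k,x^\star)$, producing a second Bregman-divergence contribution governed by $M\max_i(p_i L_i)$.

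Third, I would add the two bounds to obtain a recursion of the shape
\begin{equation*}
\E V^{k+1} \le (1-\gamma\mu)\sqN{x^k-x^\star} + \big[\text{const}\big]\cdot \gamma^2 \frac{1}{n}\sum_i \omega_i D_{f_i}\text{-terms} + \Big(1-p_m+\tfrac{2\omega}{nM}\Big)\cdot M\gamma^2\cdot\tfrac{1}{n}\sum_i \sqN{h_i^k-\nabla f_i(x^\star)},
\end{equation*}
where I collect all $D_f(x^k,x^\star)$ coefficients. The step-size condition on $\gamma$ is precisely what is needed to force this aggregate coefficient to be non-positive, so that the Bregman term can be discarded; this is where the $(1+2\omega/n)L_{\max} + M\max_i(p_iL_i)$ appears in the denominator. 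The condition $M > 2\omega/(np_m)$ ensures the second contraction factor $1-p_m+2\omega/(nM)$ is strictly less than $1$. Taking the maximum of the two contraction factors and iterating yields the claim.

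\textbf{Main obstacle.} The main technical difficulty is the coupling between the two components of $V^k$: the $\omega/n$ term in the gradient variance blows up the shift-error contribution, while the randomized refresh only reduces that contribution at rate $p_i$. Choosing $M$ large enough to buffer the $\omega/n$ leakage, while simultaneously keeping the step-size condition tight, is the delicate balancing act — essentially the same trick as in the \DIANA analysis, but with $\alpha$ replaced by $p_m$ and a much simpler variance expression because the randomized refresh is exact (no compression error inside the shift update), which is what makes the analysis noticeably cleaner than Theorem~\ref{thm:diana_shift}.
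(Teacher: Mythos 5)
Your proposal is correct and follows essentially the same route as the paper: the same unbiasedness/expected-smoothness bound on $g^k$ and the same exact recursion $\E\sqN{h_i^{k+1}-\nabla f_i(x^\star)} = p_i\sqN{\nabla f_i(x^k)-\nabla f_i(x^\star)} + (1-p_i)\sqN{h_i^k-\nabla f_i(x^\star)}$ for the shift error. The only cosmetic difference is that the paper packages these two estimates as the hypotheses of a cited general convergence theorem (Theorem 4.1 of the $\sigma^k$ framework) rather than carrying out the final Lyapunov combination by hand as you sketch.
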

Though appropriate choice of the parameters $M = \frac{4 \omega}{n p_{m}}$ and $p_i \equiv p = \frac{1}{\omega + 1}$ for every $i$, we can obtain basically the same iteration complexity as the original \DIANA \citep{horvath2019stochastic}
\begin{equation*}
    \max \left\{\frac{1}{\gamma \mu}, \frac{1}{p_{m} - \frac{2 \omega}{n M}}\right\}
    = \max \left\{\frac{L_{\max}}{\mu}\br{1 + \frac{\omega}{n}}, \omega + 1\right\}.
\end{equation*}

\subsection{Compressing the Iterates} \label{section:compressed_iter}

In this section, we discuss how the shifted compression framework can be applied and leads to improved results for the case where the iterates/models themselves need to be compressed.

Let $\cQ \in \bU(\omega)$.
Consider the following shifted by vector $x/\gamma$ compressor
\begin{equation*}
    \hat{\cQ}(z) \eqdef \frac{x}{\gamma} + \cQ\br{z - \frac{x}{\gamma}},
\end{equation*}
which clearly belongs to the class $\bU(\omega; x/\gamma)$.
Based on the fact that for $\gamma \neq 0$ compressor $\bar{\cQ}(z) \eqdef - \frac{1}{\gamma} \cdot \cQ\br{-\gamma z} \in \bU(\omega)$ we can transform $\hat{\cQ}$  to operator
\begin{equation*}
    \tilde{\cQ} (z) \eqdef 
    \frac{x}{\gamma} + \bar{\cQ}\br{z - \frac{x}{\gamma}} = 
    \frac{1}{\gamma} \sbr{x - \cQ(x - \gamma z)},
\end{equation*}
which also belongs to $\bU(\omega; x/\gamma)$ and is helpful for analysing algorithms with compressed iterates.

\textbf{Distributed Gradient Descent with Compressed Iterates (\GDCI)} 
was first analyzed by \cite{khaled2019gradient} for single node and, in short, was relaxed and formulated in a convenient form by \cite{chraibi2019distributed}:
\begin{equation} \label{eq:GDCI_basic}
    x^{k+1} = (1 - \eta) x^k + \eta \cQ\br{x^k - \gamma \nabla f(x^k)}. \tag{\GDCI} 
\end{equation}
This algorithm can be reformulated using the previously described shifted compressor $\tilde{\cQ} \in \bU(\omega; x^k/\gamma)$
\begin{equation*}
\begin{aligned}
    x^{k+1} &= 
    x^k - \br{\eta \gamma} \frac{1}{\gamma} \sbr{x^k - \cQ \br{x^k - \gamma \nabla f(x^k)}}
    \\&= 
    x^k - (\eta \gamma) \tilde{\cQ}^k(\nabla f(x^k)),
\end{aligned}
\end{equation*}
which for the distributed case takes the form
\begin{equation} \label{eq:distributed_gdci}
    x^{k+1} = 
    (1 - \eta) x^k + \eta \frac{1}{n} \sum_{i=1}^n \cQ_i \br{x^k - \gamma \nabla f_i(x^k)}.
\end{equation}

\begin{figure*}[!h]
\centering
\begin{subfigure}{\textwidth}
    \includegraphics[width=0.5\linewidth]{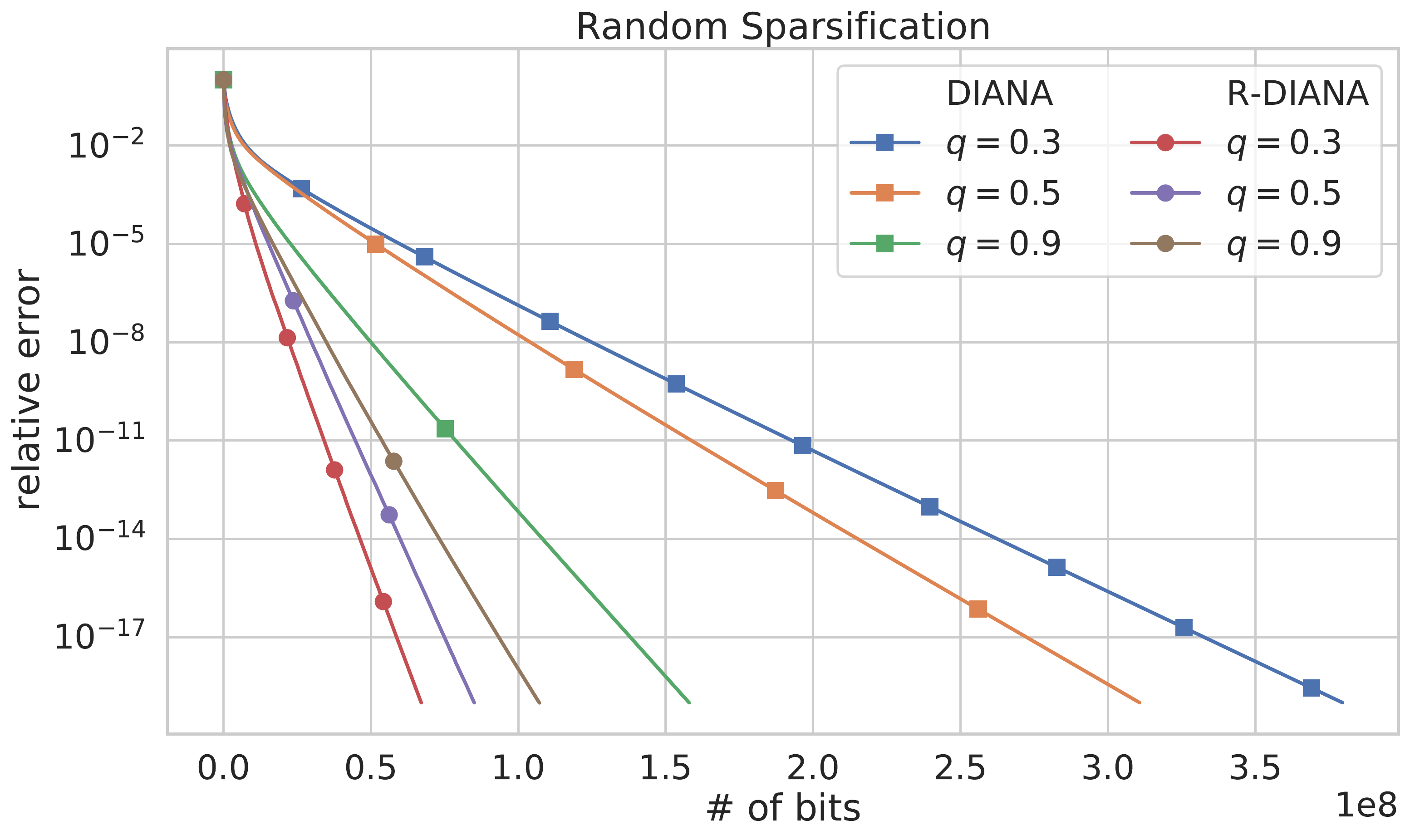}
    \includegraphics[width=0.5\linewidth]{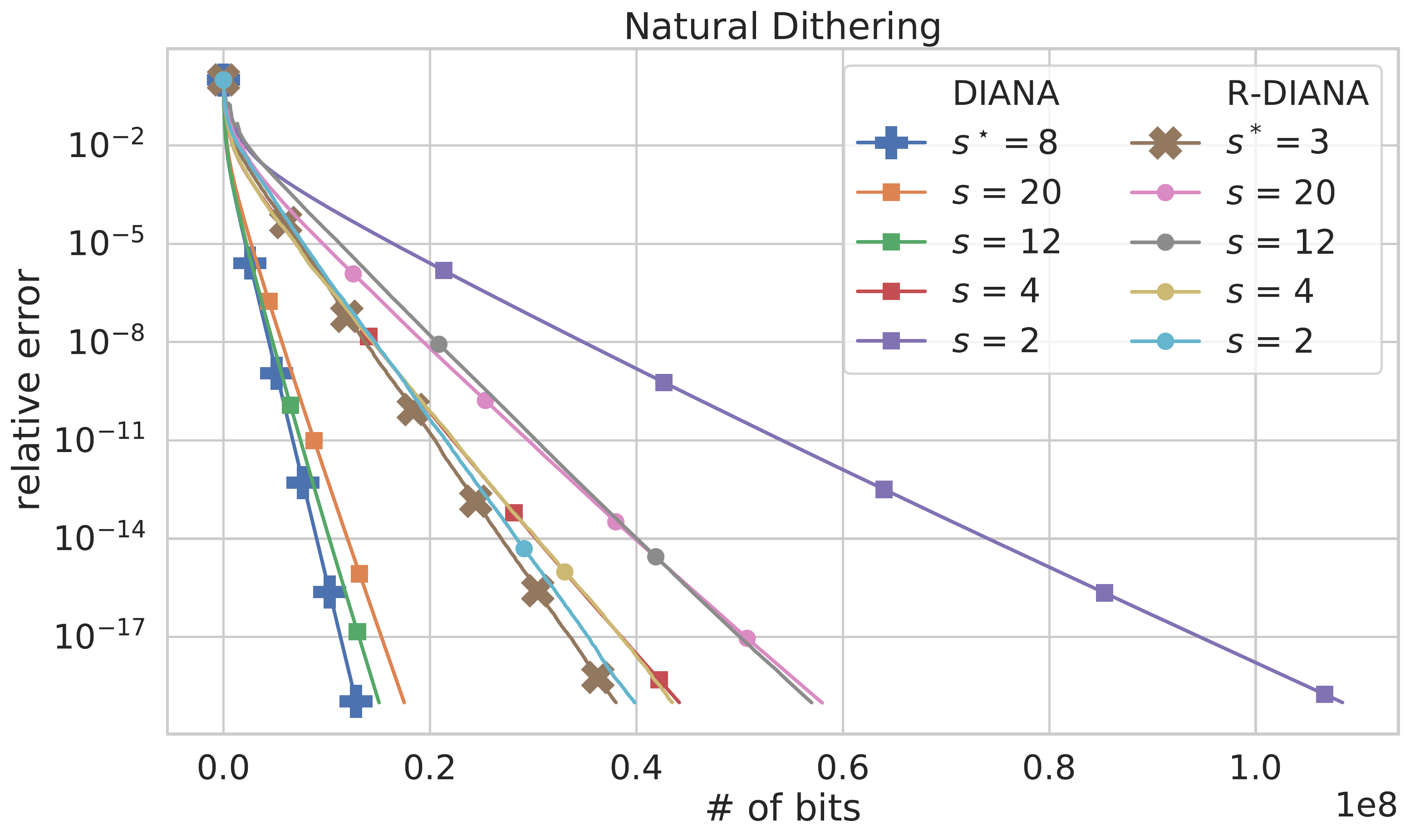}
\end{subfigure}
\caption{Comparison of \DIANA and \algname{Randomized-DIANA}. \textbf{Left plot}: methods equipped with \texttt{Rand-K} for different $q$ values.
\textbf{Right plot}: selected results of a grid search for the \texttt{ND} parameter $s$ over $\{2, \dots, 20\}$.}
\label{fig:r_diana-diana}
\end{figure*}

The essence of this method is compression of the local workers' iterates $\cQ_i \br{x^k - \gamma \nabla f_i(x^k)}$, their aggregation on the master and convex combination with the previous model. Next we present established linear convergence up to a neighborhood introduced due to variance of compression operator (similarly to \DCGD with fixed shifts Theorem~\ref{thm:fixed_shift}).

\begin{theorem}[\GDCI] \label{thm:gdci}
    Assume each $f_i$ is convex and $L_i$-smooth, and $f$ is $L$-smooth and $\mu$-strongly convex. Let $\cQ_i \in \bU(\omega)$ be independent compression operators.
    If the step-sizes satisfy
    \begin{equation*}
        \eta \leq \sbr{\frac{L}{\mu} + \frac{2\omega}{n}\br{\frac{L_{\max}}{\mu} - 1}}^{-1}, 
        \gamma \leq \frac{1 + 2\eta\omega/n}{\eta\br{L + 2 L_{\max}\omega/n}},
    \end{equation*}
    then the iterates of the Distributed \GDCI \eqref{eq:distributed_gdci} 
    satisfy
    \begin{equation} \label{eq:converge_gdci}
    \begin{aligned}
        \E\sqN{x^k - x^\star} &\leq (1 - \eta)^k \sqn{x^0 - x^\star} \\& \ \ + 
        \eta \frac{2 \omega}{n} \frac{1}{n}\sum_{i=1}^n \sqN{x^\star - \gamma \nabla f_i (x^\star)}.
    \end{aligned}
\end{equation}
\end{theorem}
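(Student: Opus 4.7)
The plan is to analyze one step of the iteration, writing $g^k \eqdef \frac{1}{n}\sum_i \cQ_i(x^k - \gamma\nabla f_i(x^k))$ so that $x^{k+1} = (1-\eta) x^k + \eta g^k$. Because the $\cQ_i$ are independent and unbiased, $\E g^k = x^k - \gamma\nabla f(x^k)$, and the mean-variance decomposition gives
\begin{equation*}
\E\sqN{x^{k+1} - x^\star} = \sqN{x^k - x^\star - \eta\gamma\nabla f(x^k)} + \tfrac{\eta^2\omega}{n^2}\sum_i\sqN{x^k - \gamma\nabla f_i(x^k)}.
\end{equation*}
For the GD-like first term I would invoke co-coercivity of $\nabla f$ (from convexity and $L$-smoothness of $f$) to obtain $\sqN{x^k - x^\star - \eta\gamma\nabla f(x^k)} \leq \sqN{x^k - x^\star} - \eta\gamma(2 - \eta\gamma L)\, T$, where $T \eqdef \langle \nabla f(x^k), x^k - x^\star\rangle$. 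For the variance summands $z_i \eqdef x^k - \gamma\nabla f_i(x^k)$, I would use the identity $z_i = [x^k - x^\star - \gamma(\nabla f_i(x^k) - \nabla f_i(x^\star))] + z_i^\star$ with $z_i^\star \eqdef x^\star - \gamma\nabla f_i(x^\star)$, apply Young's inequality $\sqN{a+b} \leq 2\sqN{a} + 2\sqN{b}$, and bound the difference part by co-coercivity of each $\nabla f_i$. Summing over $i$, the bound $(2-\gamma L_i)\ge (2-\gamma L_{\max})$ (valid regardless of the sign of $(2-\gamma L_{\max})$, since $a_i\geq 0$) together with $\sum_i \nabla f_i(x^\star) = 0$ converts $\sum_i a_i$ into $nT$ and yields
\begin{equation*}
\tfrac{\eta^2\omega}{n^2}\sum_i \sqN{z_i} \leq \tfrac{2\eta^2\omega}{n}\sqN{x^k - x^\star} - \tfrac{2\eta^2\omega\gamma(2-\gamma L_{\max})}{n}\, T + \tfrac{2\eta^2\omega}{n^2}\sum_i\sqN{z_i^\star}.
\end{equation*}

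Adding the two bounds, the coefficient of $-T$ collapses via the identity $(2-\eta\gamma L) + \frac{2\eta\omega(2-\gamma L_{\max})}{n} = 2Q - \eta\gamma P$, with $Q \eqdef 1 + \frac{2\eta\omega}{n}$ and $P \eqdef L + \frac{2\omega L_{\max}}{n}$, so the merged $-T$ coefficient is $-\eta\gamma(2Q - \eta\gamma P)$. The theorem's step-size bound $\gamma \leq Q/(\eta P)$ gives $2Q - \eta\gamma P \geq Q > 0$, and strong convexity $T \geq \mu \sqN{x^k - x^\star}$ then produces
\begin{equation*}
\E\sqN{x^{k+1} - x^\star} \leq \bigl(1 + \tfrac{2\eta^2\omega}{n} - \eta\gamma\mu Q\bigr)\sqN{x^k - x^\star} + \tfrac{2\eta^2\omega}{n^2}\sum_i\sqN{z_i^\star}.
\end{equation*}
Checking $1 + \tfrac{2\eta^2\omega}{n} - \eta\gamma\mu Q \leq 1-\eta$ reduces (after dividing by $\eta$ and cancelling $Q$) to $\gamma\mu \geq 1$; substituting the tight value $\gamma = Q/(\eta P)$ then turns this into exactly the theorem's upper bound on $\eta$, namely $\eta \leq [L/\mu + (2\omega/n)(L_{\max}/\mu - 1)]^{-1}$. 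Telescoping the one-step contraction $\E\sqN{x^{k+1} - x^\star} \leq (1-\eta)\E\sqN{x^k - x^\star} + \tfrac{2\eta^2\omega}{n^2}\sum_i\sqN{z_i^\star}$ with $\sum_{j\geq 0}(1-\eta)^j \leq 1/\eta$ delivers exactly~\eqref{eq:converge_gdci}.

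The main obstacle is the bookkeeping that merges the negative co-coercivity contributions from the GD step and the variance bound into the clean expression $2Q - \eta\gamma P$: dropping the negative piece of the variance (the tempting simplification) strictly weakens the admissible step-size and breaks the matching with the theorem. A subtle related point is that the bound $(2-\gamma L_i)\geq (2-\gamma L_{\max})$ has to be applied before any sign assumption — the merged coefficient $2Q - \eta\gamma P$ is positive only because of the $\gamma$-constraint, even when individual $\gamma L_i$ may exceed $2$, and the analysis implicitly forces $\gamma \geq 1/\mu$, which the stated $\eta$-constraint ensures automatically.
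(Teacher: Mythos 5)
Your proposal is correct and follows essentially the same route as the paper's proof: the same bias--variance decomposition, the same independence argument for the variance, the same centering of each $x^k-\gamma\nabla f_i(x^k)$ at $x^\star-\gamma\nabla f_i(x^\star)$ via Young's inequality, and the same derivation of both step-size conditions (your requirement $\eta\gamma P\le Q$ is exactly the paper's requirement that the coefficient of $D_f(x^k,x^\star)$ be non-positive). The only cosmetic differences are that you use co-coercivity of the $\nabla f_i$ where the paper uses the equivalent pair of Bregman-divergence inequalities, and that you explicitly flag what the paper leaves implicit under ``optimizing over $\gamma$ and $\eta$'' --- namely that the $(1-\eta)^k$ contraction requires taking $\gamma$ large enough that $\gamma\mu\ge 1$, which the stated $\eta$-condition makes compatible with the $\gamma$ upper bound.
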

In the interpolation regime ($\nabla f_i(x^\star) = 0 = x^\star - \gamma \nabla f_i(x^\star)$, for every $i$) this result matches the complexity of \DCGD with fixed shifts~\eqref{eq:fixed_shift_res} 
\begin{equation*}
    \tilde{\cO}\br{\kappa \br{1 + \omega/n}}
\end{equation*}
and improves over the original rate of \GDCI by \citet{chraibi2019distributed} analyzed for fixed point problems and specialized for gradient mappings:
\begin{equation*}
    \tilde{\cO}\br{\kappa\max\left\{1, \kappa \omega/n\right\}} \gtrsim \tilde{\cO}\br{\kappa^2 \omega/n}.
\end{equation*}

Due to space limitations, the results for {\bf Distributed Variance-Reduced Gradient Descent with Compressed Iterates (\algname{VR-GDCI})}, which eliminates the neighborhood in~\eqref{eq:converge_gdci}, along with detailed proofs of all stated theorems are presented in the Supplementary Material. \label{section:vr_gdci}

\begin{figure*}[h]
\centering
\begin{subfigure}{\textwidth}
    \includegraphics[width=0.5\linewidth]{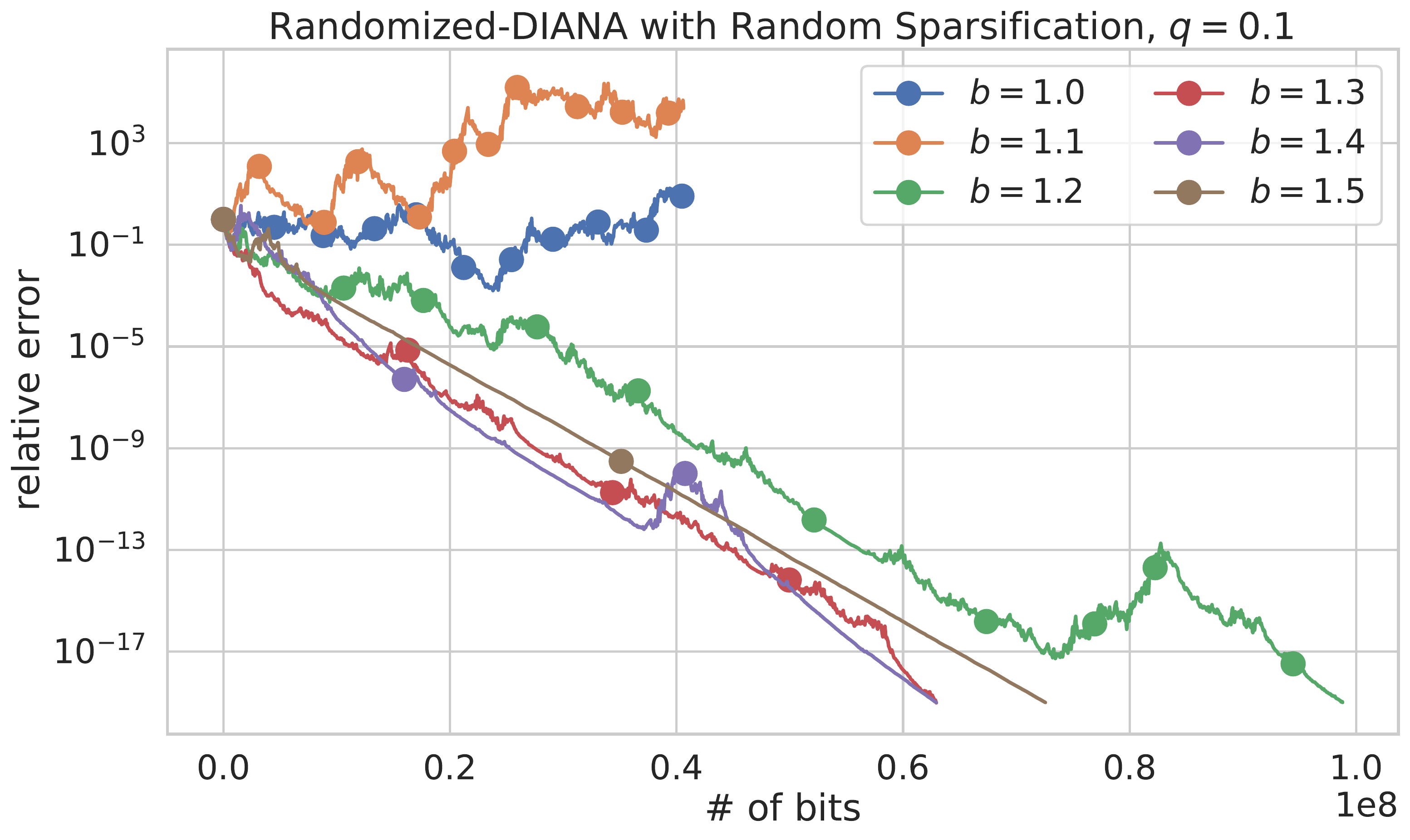}
    \includegraphics[width=0.5\linewidth]{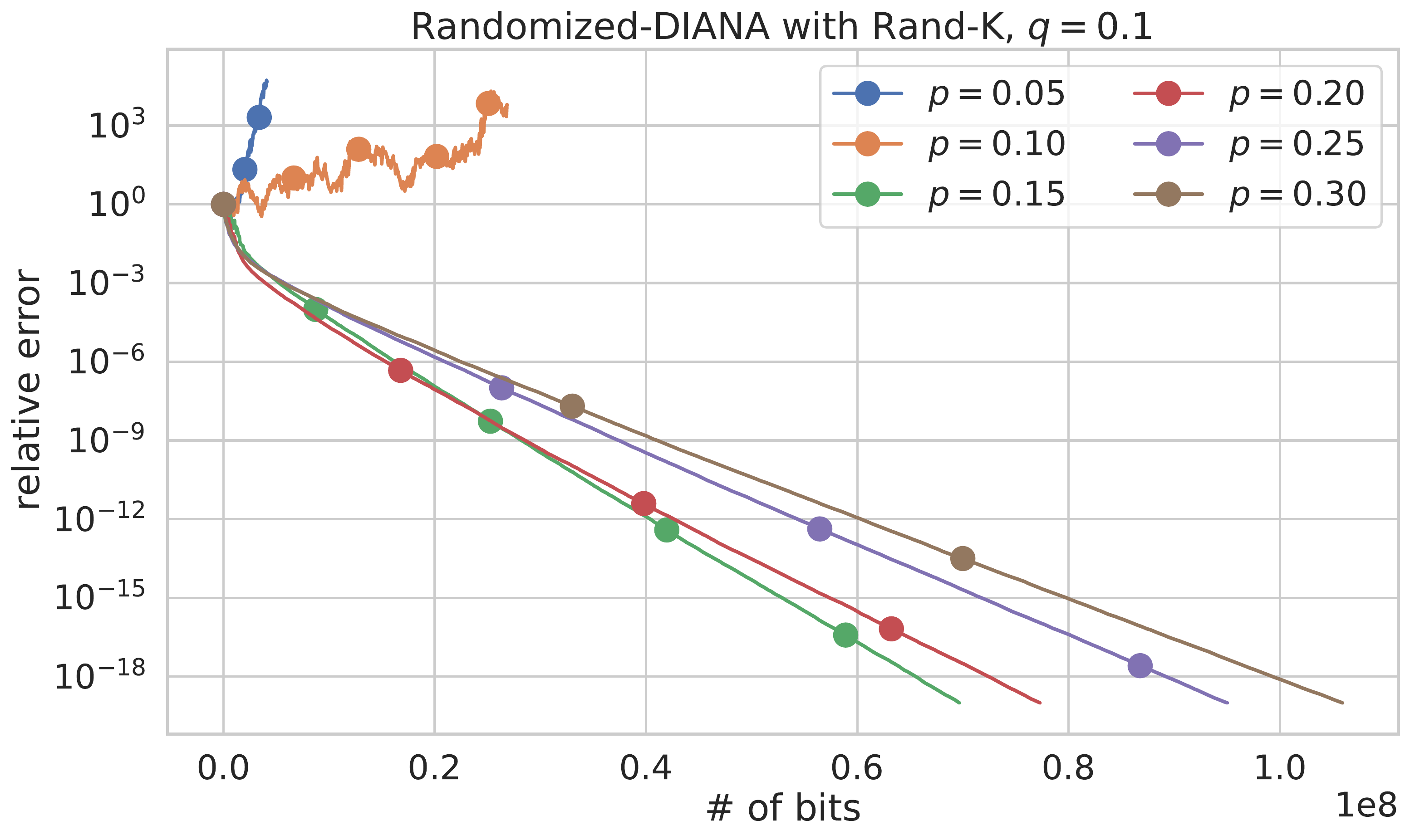}
\end{subfigure}
\caption{Study of the stability and performance of \RDIANA with varying parameters $b$ and $p$.}
\label{fig:rand_diana}
\end{figure*}

\section{EXPERIMENTS} \label{section:exp}

In this section, we present some of the experimental results obtained. The remainder of the results (including real-world data and other models) are available in Section \ref{sec:additional_experiments} of the Supplementary Material. 
To provide evidence that our theory translates into observable predictions, we focus on well-controlled settings that satisfy the assumptions in our work.

Consider a classical ridge-regression optimization problem
\begin{equation*} 
    \min_{x \in \mathbb{R}^d} \left[f(x) \eqdef 
    \frac{1}{2} \| \mathrm{A} x - y \|^2 + \frac{\lambda}{2} \|x\|^2 \right],
\end{equation*}

where $\lambda = 1/m$ and $\mathrm{A} \in \bR^{m \times d}, y \in \bR^m$ are generated using the Scikit-learn library \citep{scikit-learn} method  \href{https://scikit-learn.org/stable/modules/generated/sklearn.datasets.make_regression.html}{sklearn.datasets.make\_regression} with default parameters for $m = 100, d = 80$. The obtained data is uniformly, evenly, and randomly distributed among 10 workers.
To compare selected algorithms, we evaluate the logarithm of a relative argument error $\log \br{\|x^k-x^\star\|^2 / \|x^0-x^\star\|^2}$ on the vertical axis, while the horizontal axis presents the
number of communicated bits needed to reach a certain error tolerance $\eps$. The starting point $x^0\in\Rd$ entries are sampled from the normal distribution $\mathcal{N}(0, 10)$.

In our simulations we thoroughly examine the \RDIANA method, which is presented for the first time. Extensive studies of the methods with compressed iterates can be found in the works by \cite{khaled2019gradient, chraibi2019distributed}.

\subsection{\algname{Randomized-DIANA} vs \DIANA} \label{exp:r_diana_diana}
In the first set of experiments, we compare \RDIANA and \DIANA with different compressors $\cQ_i$ ($\cC_i \equiv 0$) and varied operators' parameters. The results obtained are summarized in Figure~\ref{fig:r_diana-diana}.
The designation $q \eqdef k/d$ is used for the share of non-zeroed coordinates of the \texttt{Random sparsification} (\texttt{Rand-K}) operator, and $s$ corresponds to the number of levels for the \texttt{Natural Dithering} (\texttt{ND}) \citep{horvath2019natural} compressor. The $p$ parameter of \RDIANA was set at $1/(\omega+1)$ for every run.

The left plot in Figure~\ref{fig:r_diana-diana} clearly shows that \RDIANA performs better than  \DIANA for every value of the \texttt{Rand-K} compressor parameter. It is worth noting that \DIANA performs better at higher $q$, while the opposite holds for \RDIANA.

From the right plot in Figure \ref{fig:r_diana-diana}, one can see that \DIANA with \texttt{ND} can be superior to \RDIANA for the optimized parameter $s^\star$. Nevertheless, \RDIANA is highly preferable for very aggressive compression (e.g., $s=2$).

In the next experimental setup, we more closely investigate the behavior of \RDIANA with respect to its parameters.

\subsection{\algname{Randomized-DIANA} Study}

According to the formulation of Theorem~\ref{thm:r_diana_shift}, the constant $M$ has to be strictly greater than $M^\prime \eqdef 2 \omega/(n p)$. In the left plot of Figure~\ref{fig:rand_diana}, we show that the method becomes less stable and can even diverge for smaller values of $M$ (set to $M^\prime \cdot b$). However, too high $M$ (for $b=1.5$) can lead to an overall (stable) slowdown. We conclude that the condition imposed by theoretical analysis is indeed critical.

The right plot in Figure \ref{fig:rand_diana} examines how the parameter $p$ affects the convergence in a high compression regime ($q=0.1$). The method converges faster for smaller $p$ and can diverge above a certain threshold, similarly to the previous study of $M$ trade-off. 

We did not conduct additional experiments to show the effect of combining unbiased compressors with biased counterparts, as the benefits of such an approach have already been clearly demonstrated by \cite{horvath2021better} for distributed training of deep neural networks.

\begin{acknowledgements}
    We would like to thank the anonymous reviewers, Laurent Condat and Konstantin Mishchenko for their helpful comments and suggestions to improve the  manuscript.
\end{acknowledgements}

\bibliography{biblio}

\newpage
\appendix
\onecolumn
\begin{center}
    {\Huge Supplementary Material}
\end{center}

\tableofcontents
\newpage

\section{BASIC FACTS}
\textbf{Bregman Divergence} associated with a continuously differentiable, strictly convex function $f: \bR^d \to \bR^d$ is defined as
\begin{equation*}
    D_f(x, y) \eqdef f(x) - f(y) - \langle \nabla f(y), x - y\rangle.
\end{equation*}

\paragraph{Bregman Divergence and Strong Convexity Inequality:}
\begin{equation} \label{eq:conv_bregman}
    \langle \nabla f(x) - \nabla f(y), x - y \rangle \geq D_f(x, y) + \frac{\mu}{2} \sqn{x - y}.
\end{equation}

\paragraph{Bregman Divergence and $L$-smoothness Inequality:}
\begin{equation} \label{eq:smooth_bregman}
    \sqn{\nabla f(x) - \nabla f(y)} \leq 2 L D_f(x, y).
\end{equation}

\paragraph{Basic Inequalities.}
For all vectors $a, b\in \Rd$ and random vector $X\in \Rd$:
\begin{equation} \label{eq:triangle}
    \sqn{a + b} \leq 2 \sqn{a} + 2 \sqn{b},
\end{equation}
\begin{equation} \label{eq:dot_product}
    2\<a, b> = \sqn{a} + \sqn{b} - \sqn{a - b},
\end{equation}
\begin{equation} \label{eq:variance_decomposition}
    \E\sqn{X - a} = \E\sqn{X - \E X} + \sqn{\E X - a}.
\end{equation}

\section{PROOFS}

For brevity, we can use the notation $\E \sqN{x}$ instead of $\Exp{\sqN{x}}$.

\subsection{Shifted Compression}

\subsubsection{Proof of Lemma~\ref{lemma:shifting_shift}}
\begin{proof}
For the proof, we need to show unbiasedness and the (shifted) bounded variance property for operator $\cQ \eqdef v + \cQ_h(x - v)$ with $\cQ_h \in \bU(\omega; h)$.\\
\textbf{(a)} Unbiasdeness:
\begin{equation*}
    \E \cQ(x) = \Exp{v + \cQ_h(x - v)} = v + \E \cQ_h(x - v) = v + (x - v) = x.
\end{equation*}
\textbf{(b)} Variance:
\begin{equation*}
\begin{aligned}
    \E \sqN{\cQ(x) - x} &= \E \sqN{v + \cQ_h(x - v) - x} 
    \\ &= 
    \E \sqN{\cQ_h(x - v) - (x - v)} 
    \\ &\leq 
    \omega \sqN{x - (v + h)} ,
\end{aligned}
\end{equation*}
where the last inequality is due to the Shifted Compressor Definition \ref{def:shifted_compressor}.
\end{proof}

\subsubsection{Compression of the Iterates} \label{section:iter_compr}

\begin{lemma} \label{lem:iter_compression}
Let $\cQ \in \bU(\omega)$, then for 
\begin{equation} \label{eq:scaled_compressor}
    \hat{\cQ}(z) \eqdef -t \cdot \cQ\br{-\frac{z}{t}}, \qquad t \neq 0,
\end{equation}
we have $\hat{\cQ}(z) \in \bU(\omega)$.
\end{lemma}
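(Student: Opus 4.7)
The plan is to verify the two defining properties of the unbiased compressor class $\bU(\omega)$ for $\hat{\cQ}$ by plugging in the definition \eqref{eq:scaled_compressor} and using that $\cQ \in \bU(\omega)$ applied to the point $-z/t$. Since $t \neq 0$, dividing by $t$ and multiplying by $t$ are legitimate operations, so both checks reduce to pulling the scalar $-t$ out of the relevant expressions and invoking Definition~\ref{def:unbiased_compressor} at the argument $-z/t$.

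For the unbiasedness check, I would write $\E \hat{\cQ}(z) = -t \cdot \E \cQ(-z/t) = -t \cdot (-z/t) = z$, where the middle equality uses property (a) of Definition~\ref{def:unbiased_compressor} for $\cQ$. For the variance check, I would compute
\begin{equation*}
    \E \sqN{\hat{\cQ}(z) - z} = \E \sqN{-t\,\cQ(-z/t) - z} = t^2 \,\E \sqN{\cQ(-z/t) - (-z/t)} \leq t^2 \cdot \omega \sqN{-z/t} = \omega \sqN{z},
\end{equation*}
where the scalar $-t$ is pulled out of the squared norm, property (b) of Definition~\ref{def:unbiased_compressor} is applied at the point $-z/t$, and finally the factor $t^2$ cancels with $\sqN{-z/t} = \sqN{z}/t^2$.

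There is essentially no obstacle here: the lemma is a direct consequence of the definition, and the only thing to be careful about is treating the sign flip and the scalar $t$ cleanly (making sure $t \neq 0$ is used only to divide, not to reason about the variance bound itself). This lemma is then invoked in the main text with the choice $t = \gamma$ to express iterate-compression operators of the form $\bar{\cQ}(z) = -(1/\gamma)\,\cQ(-\gamma z)$ as genuine members of $\bU(\omega)$, which is what enables the shifted-compressor reformulation of \GDCI used in Theorem~\ref{thm:gdci}.
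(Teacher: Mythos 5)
Your proof is correct and follows essentially the same route as the paper: pull the scalar $-t$ out and invoke Definition~\ref{def:unbiased_compressor} at the point $-z/t$. The only (immaterial) difference is that you verify property (b) directly via $\E\sqN{\hat{\cQ}(z)-z}\leq\omega\sqn{z}$, whereas the paper checks the equivalent second-moment form $\E\sqN{\hat{\cQ}(z)}\leq(1+\omega)\sqn{z}$ from~\eqref{eq:second_moment}; your version is, if anything, the more literal check of the definition.
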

\begin{proof}
We consequentially show the unbiasedness (1) and bounded variance (2) properties according to Definition~\ref{def:unbiased_compressor}
\begin{equation*}
\begin{aligned}
    &1) \E \hat{\cQ}(z) = -t \cdot \E \cQ\br{-\frac{z}{t}} = -t \cdot \br{-\frac{z}{t}} = z;
    \\
    &2) \E \sqN{\hat{\cQ}(z)} = 
    \E \sqN{-t \cdot \cQ\br{-\frac{z}{t}}} = 
    t^2 \E \sqN{\cQ\br{-\frac{z}{t}}} \leq 
    t^2 (\omega + 1) \sqN{-\frac{z}{t}} = 
    (\omega + 1) \sqn{z}.
\end{aligned}
\end{equation*}
\end{proof}

Now, we prove that the shifted compressor $\tilde{\cQ} (z) \eqdef \frac{1}{\gamma} \sbr{x - \cQ(x - \gamma z)}$, obtained from~\eqref{eq:scaled_compressor} by the procedure described in Section~\ref{section:compressed_iter}, belongs to class $\bU \br{\omega; x/\gamma}$ for $\cQ \in \bU(\omega)$.
\begin{proof}
1) \textbf{Unbiasedness} follows from $\E\cQ(x) = x$.

2) Computation of the \textbf{variance}: 
\begin{equation*}
\begin{aligned}
    \E \sqN{\tilde{\cQ}(z)} 
    &= 
    \E \sqN{\frac{1}{\gamma} \left[x - \cQ(x - \gamma z)\right]} 
    \\&= 
    \frac{1}{\gamma^2} \E \sqN{x - \cQ(x - \gamma z)} 
    \\ &= 
    \frac{1}{\gamma^2} \E \sqN{x - \gamma z - \cQ(x - \gamma z) + \gamma z} 
    \\ &=
    \frac{1}{\gamma^2} \left[\E \sqN{x - \gamma z - \cQ(x - \gamma z)} + \sqn{\gamma z} \right] 
    \\ &\stackrel{\ref{def:unbiased_compressor}}{\leq}
    \frac{1}{\gamma^2} \left[\omega \sqN{x - \gamma z} + \sqn{\gamma z} \right]
    \\ &=
    \omega \sqN{\frac{x}{\gamma} - z} + \sqn{z},
\end{aligned}
\end{equation*}
which implies 
\begin{equation*}
   \E \sqN{\tilde{\cQ}(z) - z} \leq \omega \sqN{z - \frac{x}{\gamma}}.
\end{equation*}
\end{proof}

\subsubsection{Induced Compressor}

\begin{definition}[Induced Compression Operator \citep{horvath2021better}] \label{def:induced_compressor}
For $\cC \in \bB(\delta)$, choose $\cQ \in \bU(\omega)$ and define the induced compressor via 

\begin{equation} \label{eq:induced_compressor}
    \cC_{\mathrm{ind}}(x) \eqdef \cC(x) + \cQ(x - \cC(x)).
\end{equation}
\end{definition}

\begin{lemma}
The induced operator satisfies $\cC_{\mathrm{ind}} \in \bU(\tilde{\omega})$ for 
\begin{equation} \label{eq:omega_induced}
    \tilde{\omega} = \omega (1 - \delta).
\end{equation}
\end{lemma}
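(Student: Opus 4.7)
The plan is to verify the two conditions of Definition~\ref{def:unbiased_compressor} for $\cC_{\mathrm{ind}}$ by a tower-of-expectations argument that conditions first on the randomness of the biased compressor $\cC$, treating the inner compressor $\cQ$ as acting on the (random) argument $y \eqdef x - \cC(x)$. Throughout, I will assume that $\cC$ and $\cQ$ use independent sources of randomness, which is the standard setup in this literature.

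\textbf{Step 1: Unbiasedness.} I would apply the tower property and condition on $\cC$. Since $\cQ \in \bU(\omega)$, we have $\Exp{\cQ(y) \mid \cC} = y = x - \cC(x)$. Plugging into the definition of $\cC_{\mathrm{ind}}$ gives $\Exp{\cC_{\mathrm{ind}}(x) \mid \cC} = \cC(x) + (x - \cC(x)) = x$, and taking the outer expectation preserves this. So property (a) holds with no slack, independently of $\delta$.

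\textbf{Step 2: Variance bound.} The key algebraic observation is the cancellation
\begin{equation*}
    \cC_{\mathrm{ind}}(x) - x = \cC(x) + \cQ(x - \cC(x)) - x = \cQ(y) - y,
\end{equation*}
with $y = x - \cC(x)$. Conditioning on $\cC$ and using the bounded-variance property of $\cQ$ gives $\Exp{\sqn{\cQ(y) - y} \mid \cC} \leq \omega \sqn{y}$. Taking the outer expectation and then invoking $\cC \in \bB(\delta)$ via Definition~\ref{def:general_compressor},
\begin{equation*}
    \E \sqn{\cC_{\mathrm{ind}}(x) - x} \leq \omega \, \E \sqn{x - \cC(x)} \leq \omega (1-\delta) \sqn{x},
\end{equation*}
which is exactly the claimed bound with $\tilde{\omega} = \omega(1-\delta)$.

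There is no real obstacle here; the only subtle point is the bookkeeping of the two independent sources of randomness, which is handled cleanly by conditioning on $\cC$ first so that $\cQ$ sees a deterministic input $y$. The algebraic cancellation $\cC_{\mathrm{ind}}(x) - x = \cQ(y) - y$ is what makes the variance scale with $\|x - \cC(x)\|^2$ rather than $\|x\|^2$, thereby harvesting the contraction factor $(1-\delta)$ from the biased compressor. No additional assumptions on $\cC$ (such as unbiasedness) are needed.
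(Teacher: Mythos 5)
Your proof is correct and follows essentially the same route as the paper: the cancellation $\cC_{\mathrm{ind}}(x)-x=\cQ(y)-y$ with $y=x-\cC(x)$, followed by the bounded-variance property of $\cQ$ (applied conditionally on $\cC$) and then the contraction property of $\cC$. Your write-up is in fact slightly more careful than the paper's, since you make the conditioning/independence explicit and also verify unbiasedness, which the paper's proof omits.
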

\begin{proof}
\begin{equation*}
\begin{aligned}
    \E\norm{\cC_{\mathrm{ind}}(x) - x}^2
    &\stackrel{\eqref{eq:induced_compressor}}{=} 
    \E\norm{\cQ(x - \cC(x)) - (x - \cC(x))}^2
    \\&\stackrel{\ref{def:unbiased_compressor}}{\leq}
    \omega \E\norm{x - \cC(x)}^2
    \\&\stackrel{\ref{def:general_compressor}}{\leq}
    \omega(1-\delta) \sqn{x}
    \\&=
    \tilde{\omega} \sqn{x}.
\end{aligned}
\end{equation*}
\end{proof}

\subsection{Proof of Theorem~\ref{thm:fixed_shift} (\algname{DCGD-SHIFT})} \label{section:appendix:fixed_shift}

According to Algorithm~\ref{alg:dcgd_shift}, the gradient estimator always has the following form
\begin{equation} \label{eq:grad_estimator}
    g_h(x) = 
    \frac{1}{n} \sum_{i=1}^n g_{h_i}(x) =  
    \frac{1}{n} \sum_{i=1}^n \sbr{h_i + \cQ_i \br{\nabla f_i(x) - h_i}}.
\end{equation}
The obvious unbiasedness (as $\cQ_i \in \bU(\omega_i)$) of this estimator for any $h_i$ will be used in all further proofs.

Decomposition due to \eqref{eq:variance_decomposition}
\begin{equation} \label{eq:decomposition}
    \E \sqN{g_h (x^k) - \nabla f(x^\star)} = 
    \E \sqN{g_h (x^k) - \nabla f(x^k)} + \sqN{\nabla f(x^k) - \nabla f(x^\star)}.
\end{equation}

Next, we upper-bound the first term from~\eqref{eq:decomposition}.

Expectation conditional on $x^k$ and $h = (h_1, \dots, h_n)$ for brevity
\begin{equation*}
\begin{aligned} 
    \E \sqN{g_{h} (x^k) - \nabla f(x^k)} 
    &= 
    \E \sqN{g_{h} (x^k) - \nabla f(x^k)}
    \\&\stackrel{\eqref{eq:grad_estimator}}{=} 
    \E\Bigg\|\frac{1}{n} \sum_{i=1}^n \underbracket{\cQ_i\left(\nabla f_i (x^k) - h_i\right) + h_i - \nabla f_i(x^k)}_{b_i^k}\Bigg\|^2
    \\&= 
    \frac{1}{n^2}\Exp{\sum_{i=1}^n \|b_i^k\|^2 + \sum_{i \neq j}\langle b_i^k, b_j^k \rangle}
    \\&= 
    \frac{1}{n^2}\sum_{i=1}^n \E\|b_i^k\|^2 + \frac{1}{n^2}\sum_{i \neq j} \underbracket{\langle \E b_i^k, \E b_j^k \rangle}_{=0 \ (\E\cQ_i(x)=x)}
    \\&= 
    \frac{1}{n^2}\sum_{i=1}^n \E\|\cQ_i\left(\nabla f_i (x^k) - h_i\right) + h_i - \nabla f_i(x^k)\|^2
    \\&\stackrel{\ref{def:unbiased_compressor}}{\leq}
    \frac{1}{n^2}\sum_{i=1}^n \omega_i \|\nabla f_i (x^k) - h_i\|^2
    \\&=
    \frac{1}{n^2}\sum_{i=1}^n \omega_i \|\nabla f_i(x^k) - \nabla f_i (x^\star) - \br{h_i - \nabla f_i (x^\star)}\|^2
    \\& \stackrel{\eqref{eq:triangle}}{\leq}
    \frac{2}{n^2}\sum_{i=1}^n \omega_i \sbr{ \sqN{\nabla f_i(x^k) - \nabla f_i (x^\star)} + \sqN{h_i - \nabla f_i (x^\star)}}
    \\&\stackrel{\eqref{eq:smooth_bregman}}{\leq}
    \frac{2}{n^2} \sum_{i=1}^n \omega_i \cdot 2 L_i D_{f_i}(x^k, x^\star) + 
    \frac{2}{n^2} \sum_{i=1}^n \omega_i \sqN{h_i - \nabla f_i (x^\star)}
    \\&\leq
    \frac{4}{n} \max\br{\omega_i L_i} \frac{1}{n} \sum_{i=1}^n D_{f_i}(x^k, x^\star) + 
    \frac{2}{n^2} \sum_{i=1}^n \omega_i \sqN{h_i - \nabla f_i (x^\star)}
    \\&\leq
    \frac{4}{n} \max\br{\omega_i L_i} D_f(x^k, x^\star) + 
    \frac{2}{n^2} \sum_{i=1}^n \omega_i \sqN{h_i - \nabla f_i (x^\star)}.
\end{aligned}
\end{equation*}

Combined with~\eqref{eq:decomposition}, we obtain
\begin{equation*}
    \Exp{\sqN{g_h (x^k) - \nabla f(x^\star)} \mid x^k, h} \leq
    2\sbr{\frac{2}{n} \max\br{\omega_i L_i} + L} D_f(x^k, x^\star) + 
    \frac{2}{n^2} \sum_{i=1}^n \omega_i \sqN{h_i - \nabla f_i (x^\star)}
\end{equation*}

Now from \citet[Theorem 4.1]{sigma_k}, we obtain the statement of Theorem~\ref{thm:diana_shift}.

\subsection{Proof of Theorem~\ref{thm:optimal_shift} (\algname{DCGD-STAR})}
The Distributed Compressed Gradient Descent with optimal Shift is determined by the update
\begin{equation*}
    h_i^k = h_i^\star + \cC_i(\nabla f_i (x^k) - h_i^\star),
\end{equation*}
where $h_i^\star = \nabla f_i (x^\star)$.

\begin{proof}
First, we compute the \textbf{variance of the estimator}.

We use expectation conditional on $x^k$ and $h^k = (h^k_1, \dots, h^k_n)$, for brevity
\begin{equation} \label{eq:optimal_var}
\begin{aligned} 
    \Var[g^k] &= 
    \E\left\|g^k - \Exp{g^k}\right\|^2
    \\&= 
    \E\Bigg\|\frac{1}{n} \sum_{i=1}^n \cQ_i\left(\nabla f_i (x^k) - h_i^k\right) + h_i^k - \nabla f(x^k)\Bigg\|^2
    \\&= 
    \E\Bigg\|\frac{1}{n} \sum_{i=1}^n \sbr{\cQ_i\left(\nabla f_i (x^k) - h_i^k\right) + h_i^k - \nabla f_i(x^k)}\Bigg\|^2
    \\&= 
    \frac{1}{n^2}\sum_{i=1}^n \E\|\cQ_i\left(\nabla f_i (x^k) - h_i^k\right) + h_i^k - \nabla f_i(x^k)\|^2
    \\&\stackrel{\ref{def:unbiased_compressor}}{\leq}
    \frac{1}{n^2}\sum_{i=1}^n \omega_i \E\|\nabla f_i (x^k) - h_i^k\|^2
    \\&=
    \frac{1}{n^2}\sum_{i=1}^n \omega_i \E\|\nabla f_i(x^k) - h_i^\star - \cC_i\left(\nabla f_i(x^k) - h_i^\star\right)\|^2
    \\&\stackrel{\ref{def:general_compressor}}{\leq}
    \frac{1}{n^2}\sum_{i=1}^n \omega_i (1 - \delta_i)\|\nabla f_i(x^k) - \nabla f_i(x^\star)\|^2
    \\&\stackrel{\eqref{eq:smooth_bregman}}{\leq}
    \frac{1}{n^2}\sum_{i=1}^n \omega_i (1 - \delta_i) \cdot 2 L_i D_{f_i}(x^k, x^\star)
    \\&\leq
    \frac{2}{n} \max\{\omega_i (1-\delta_i) L_i\} \frac{1}{n} \sum_{i=1}^n D_{f_i}(x^k, x^\star)
    \\&\leq
    \frac{2}{n} \max\{\omega_i (1-\delta_i) L_i\} D_f(x^k, x^\star).
\end{aligned}
\end{equation}

Now we can move to the \textbf{convergence proof}.
Expectation conditional on $x^k$ and $h^k$:

\begin{align} 
    \E \sqn{r^{k+1}} &= \E \sqn{x^{k+1} - x^\star} \notag
    \\ &= \notag
    \E \sqn{x^k - \gamma g^k - (x^\star - \gamma \nabla f(x^\star))} 
    \\ &= \notag
    \E \sqn{x^k - x^\star - \gamma (g^k - \nabla f(x^\star))}
    \\ &= \notag
    \sqn{r^k} - 2\gamma \langle x^k - x^\star, \E g^k - \nabla f (x^\star) \rangle + \gamma^2 \E \sqN{g^k - \nabla f(x^\star)}
    \\ &= \notag
    \sqn{r^k} - 2\gamma \langle x^k - x^\star, \nabla f (x^k) - \nabla f (x^\star) \rangle 
    \\&\qquad + \notag
    \gamma^2 \left[\E \sqN{g^k - \nabla f(x^k)} + \sqN{\nabla f(x^k) - \nabla f(x^\star)}\right]
    \\ &\stackrel{\eqref{eq:conv_bregman}}{\leq} \notag
    \sqn{r^k} - 2\gamma \left[D_f(x^k, x^\star) + \frac{\mu}{2} \sqn{x^k - x^\star}\right]
    \\& \qquad+ \notag
    \gamma^2 \left[\Var[g^k] + \sqN{\nabla f(x^k) - \nabla f(x^\star)}\right]
    \\ &\stackrel{\eqref{eq:optimal_var}}{\leq} \notag
    (1 - \gamma \mu) \sqn{r^k} - 2\gamma D_f(x^k, x^\star)
    \\&\qquad+ \notag
    \gamma^2 \left[\frac{2}{n} \max\{\omega_i (1-\delta_i) L_i\} D_f(x^k, x^\star) + 2 L D_f(x^k, x^\star)\right]
    \\ &= \notag
    (1 - \gamma \mu) \sqn{r^k} - 2\gamma \sbr{1 - \gamma \br{L + \max\{L_i \omega_i (1-\delta_i)/n\}}} D_f(x^k, x^\star)
    \\ &\leq
    (1 - \gamma \mu) \sqn{r^k}, \label{eq:convergence_opt}
\end{align}
where the last inequality is due to the step-size choice

\begin{equation*}
    \gamma \leq \frac{1}{L + \max_i\br{L_i \omega_i(1 - \delta_i)/n}}.
\end{equation*}
Therefore, 
\begin{equation*}
\begin{aligned}
    \E \sqn{x^{k+1} - x^\star} 
    &=
    \Exp{\Exp{\sqn{x^{k+1} - x^\star} \mid x^k, h^k}}  
    \\ &\stackrel{\eqref{eq:convergence_opt}}{\leq}
    (1 - \gamma \mu) \E \sqn{x^k - x^\star} 
    \\ &\leq
    (1 - \gamma \mu)^{k+1} \sqn{x^0 - x^\star},
\end{aligned}
\end{equation*}
which concludes the proof.
\end{proof}

\subsection{Proof of Theorem~\ref{thm:diana_shift} (\DIANA-like)}
\DIANA-like shift update has the following form
\begin{equation*}
    h_i^{k+1} = h_i^k + \alpha \sbr{\cC_i(\nabla f_i(x^k) - h_i^k) + 
    \cQ_i\br{\nabla f_i(x^k) - h_i^k - \cC_i(\nabla f_i(x^k) - h_i^k)}}.
\end{equation*}
which is in fact equivalent to the standard \DIANA shift update with induced compressor $\cQ_i^{\mathrm{ind}} \in \bU\br{\omega_i (1 - \delta_i)}$ defined before~(\ref{def:induced_compressor}):
\begin{equation*}
    h_i^{k+1} = h_i^k + \alpha \cQ_i^{\mathrm{ind}}\br{\nabla f_i(x^k) - h_i^k},
\end{equation*}

The proof of Theorem~\ref{thm:diana_shift} is mainly a generalization of the original \DIANA analysis. Our approach is based on~\cite[Theorem 4.1]{sigma_k}, which requires the following Lemma (referred to as Assumption 4.1 in \citep{sigma_k}).

\begin{lemma} \label{lemma:DIANA_sigmak}
Assume that functions $f_i$ are convex and $L_i$-smooth for all $i$, and $\cQ_i \in \bU(\omega_i), \cC_i \in \bB(\delta_i)$. Let $h = \left(h_{1}, h_{2}, \ldots, h_{n}\right) \in \mathbb{R}^{d} \times \mathbb{R}^{d} \ldots \times \mathbb{R}^{d}=\mathbb{R}^{n d}$ and define $\sigma: \mathbb{R}^{n d} \rightarrow[0, \infty)$ and $\sigma^k$ by
\begin{equation*}
    \sigma(h) = \frac{1}{n} \sum_{i=1}^n \omega_i \sqN{h_i - \nabla f_i(x^\star)}, \qquad \sigma^k \eqdef \sigma(h^k) = \frac{1}{n} \sum_{i=1}^n \omega_i \sqN{h_i^k - \nabla f_i (x^\star)}.
\end{equation*}
Then, for all iterations of \DCGD with the \DIANA-like shift update, we have
\begin{align*}
  \Exp{g^k \mid x^k, h^k} &= \nabla f(x^k), \\
  \Exp{\left\|g^k-\nabla f(x^\star)\right\|^2 \mid x^k, h^k}
  &\leq 2 \left(2 \max_i(L_i\omega_i/n) + L_{\max} \right) D_f(x^k, x^\star) + \frac{2}{n} \sigma^k, \\ 
  \Exp{\sigma^{k+1} \mid x^{k}, h^k} & \leq (1 - \alpha) \sigma^k + 2 \alpha \max_i(L_i \omega_i) D_f(x^k, x^\star). 
\end{align*}
\end{lemma}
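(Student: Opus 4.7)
The key observation is that the \DIANA-like shift update \eqref{eq:diana_shift} can be rewritten as $h_i^{k+1} = h_i^k + \alpha \cQ_i^{\mathrm{ind}}(\nabla f_i(x^k) - h_i^k)$, where $\cQ_i^{\mathrm{ind}} \in \bU(\omega_i(1-\delta_i))$ is the induced compressor established in the preceding section of the appendix. This reduces the analysis to a standard \DIANA-style argument with an effective unbiasedness parameter $\tilde{\omega}_i \eqdef \omega_i(1-\delta_i)$ for the \emph{shift update}, while the gradient estimator $g^k = h^k + \frac{1}{n}\sum_i \cQ_i(\nabla f_i(x^k) - h_i^k)$ still uses only $\cQ_i \in \bU(\omega_i)$. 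The plan is to verify the three claims in sequence.

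The unbiasedness claim is immediate: applying $\E\cQ_i(y) = y$ termwise gives $\E[g^k \mid x^k, h^k] = h^k + \frac{1}{n}\sum_i (\nabla f_i(x^k) - h_i^k) = \nabla f(x^k)$. For the second-moment bound, I would first apply the variance decomposition \eqref{eq:variance_decomposition} to split $\E\sqN{g^k - \nabla f(x^\star)}$ into $\E\sqN{g^k - \nabla f(x^k)}$ plus $\sqN{\nabla f(x^k) - \nabla f(x^\star)}$. The second piece is bounded by $2 L_{\max} D_f(x^k, x^\star)$ using Jensen's inequality and \eqref{eq:smooth_bregman} applied to each $f_i$. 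The first piece is handled exactly as in the proof of Theorem~\ref{thm:fixed_shift} in Appendix~\ref{section:appendix:fixed_shift}: cross-terms vanish by independence, bounded-variance yields $\frac{1}{n^2}\sum_i \omega_i\|\nabla f_i(x^k) - h_i^k\|^2$, and then \eqref{eq:triangle} together with \eqref{eq:smooth_bregman} separates $\nabla f_i(x^k) - h_i^k$ into a piece producing $\frac{4}{n}\max_i(L_i\omega_i) D_f(x^k, x^\star)$ and a piece producing $\frac{2}{n}\sigma^k$. Collecting gives exactly the stated bound.

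The third claim is the main technical step and the only place where the step-size restriction on $\alpha$ enters. For a fixed $i$, I would expand
\begin{equation*}
    \E\sqN{h_i^{k+1} - \nabla f_i(x^\star)} = \sqN{h_i^k - \nabla f_i(x^\star)} + 2\alpha\langle h_i^k - \nabla f_i(x^\star),\, \nabla f_i(x^k) - h_i^k\rangle + \alpha^2 \E\sqN{\cQ_i^{\mathrm{ind}}(\nabla f_i(x^k) - h_i^k)},
\end{equation*}
use the induced-compressor second-moment bound \eqref{eq:second_moment} with parameter $\tilde{\omega}_i$, and then rewrite the inner product via the identity \eqref{eq:dot_product} applied to $a = \nabla f_i(x^\star) - h_i^k$ and $b = \nabla f_i(x^k) - h_i^k$ so that the difference $a - b = \nabla f_i(x^\star) - \nabla f_i(x^k)$ appears. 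After regrouping, the coefficient in front of $\|\nabla f_i(x^k) - h_i^k\|^2$ becomes $\alpha^2(1+\tilde{\omega}_i) - \alpha$, which is nonpositive exactly under the assumed $\alpha \leq 1/(1+\omega_i(1-\delta_i))$, so that term is discarded. What remains is $(1-\alpha)\sqN{h_i^k - \nabla f_i(x^\star)} + \alpha\sqN{\nabla f_i(x^k) - \nabla f_i(x^\star)}$, and \eqref{eq:smooth_bregman} turns the second summand into $2\alpha L_i D_{f_i}(x^k, x^\star)$. Multiplying by $\omega_i/n$, summing over $i$, and upper-bounding the averaged Bregman divergences by $\max_i(\omega_i L_i) D_f(x^k, x^\star)$ yields the stated contraction.

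The only genuine subtlety lies in the third step: ensuring the dot-product identity is set up so that the leftover positive term is controllable by the step-size condition on $\alpha$, and that the identification of the correct compressor parameter $\omega_i(1-\delta_i)$ (rather than $\omega_i$) for the shift update is justified by the induced-compressor lemma. Everything else is a routine assembly of identities already listed in the \textbf{BASIC FACTS} section.
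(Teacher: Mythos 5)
Your proposal is correct and follows essentially the same route as the paper's proof: reduction of the shift update to the induced compressor $\cQ_i^{\mathrm{ind}} \in \bU(\omega_i(1-\delta_i))$, variance decomposition plus independence and smoothness for the second-moment bound, and expansion of $\|h_i^k + \alpha m_i^k - h_i^\star\|^2$ with the dot-product identity and the condition $\alpha \leq 1/(1+\omega_i(1-\delta_i))$ for the $\sigma^k$ recursion. The only (immaterial) difference is the order of operations in the last step — you discard the nonpositive coefficient $\alpha^2(1+\tilde{\omega}_i)-\alpha$ after applying the identity, while the paper first bounds $\alpha^2(1+\tilde{\omega}_i) \leq \alpha$ and then applies the identity — and both yield the identical final bound.
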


\begin{proof}
The lemma consists of three statements. \\ 
The \textbf{1) unbiasedness} of the shifted gradient estimator was already shown in~\ref{section:appendix:fixed_shift}.

\textbf{2) Expected smoothness:}
\begin{align*}
    G^k &\eqdef \Exp{\left\|g^{k}-\nabla f(x^\star)\right\|^{2} \mid x^{k}, h^{k}} 
    \\ &\stackrel{\eqref{eq:variance_decomposition}}{=} 
    \E\left[\|g^k - \nabla f(x^k)\|^{2} \mid x^{k}, h^{k}\right] + \sqN{\nabla f(x^k) - \nabla f(x^{\star})} 
    \\ &\stackrel{\eqref{eq:smooth_bregman}}{\leq}
    \E\left[\|g^k - \nabla f(x^k)\|^{2} \mid x^{k}, h^{k}\right] + 2 L D_f(x^k, x^\star)
    \\ &\stackrel{\eqref{eq:grad_estimator}}{=}
    \E\left[ \left\|\frac{1}{n}\sum_{i=1}^{n} \left(h_{i}^{k} + \cQ_i\left(\nabla f_{i}(x^k) - h_{i}^{k}\right) - \nabla f_i(x^k) \right)\right\|^{2} \mid x^{k}, h^{k}\right] + 2 L D_f(x^k, x^\star)
    \\&\leq
    \frac{1}{n^2}\sum_{i=1}^{n} \E\left[\left\|\cQ_i \left(\nabla f_{i}(x^k) - h_{i}^{k}\right) - (\nabla f_i(x^k) - h_{i}^{k})\right\|^{2} \mid x^{k}, h^{k}\right] + 2 L D_f(x^k, x^\star) 
    \\& \stackrel{\ref{def:unbiased_compressor}}{\leq}
    \frac{1}{n^2}\sum_{i=1}^{n} \omega_i \sqN{\nabla f_i(x^k) - h_{i}^{k})} + 2 L D_f(x^k, x^\star) 
    \\& \stackrel{\eqref{eq:triangle}}{\leq}
    \frac{1}{n^2}\sum_{i=1}^{n} \omega_i \left[2\sqN{\nabla f_i(x^k) - \nabla f_i(x^\star))} + 2 \sqN{\nabla f_i(x^\star) - h_{i}^{k})} \right] + 2 L D_f(x^k, x^\star) 
    \\& \stackrel{\eqref{eq:smooth_bregman}}{\leq}
    2 \frac{2}{n} \max(\omega_i L_i) \frac{1}{n} \sum_{i=1}^{n} D_{f_i}(x^k, x^\star) + \frac{2}{n} \frac{1}{n}\sum_{i=1}^{n} \omega_i \sqN{\nabla f_i(x^\star) - h_{i}^{k})} + 2 L D_f(x^k, x^\star) 
    \\&\leq
    2 \left[\frac{2}{n} \max(\omega_i L_i) + L_{\max} \right] D_f(x^k, x^\star) + \frac{2}{n} \sigma^k.
\end{align*}

\textbf{3) Sigma-$k$ recursion:}
Denote $m_i^k \eqdef \cQ_i^{\mathrm{ind}}\br{\nabla f_i(x^k) - h_i^k}$ and $h_i^\star = \nabla f_i(x^\star)$.
\begin{align*}
    \Exp{\sigma^{k+1} \mid x^{k}, h^{k}} 
    &=
    \E\left[\frac{1}{n} \sum_{i=1}^{n} \omega_i \left\|h_{i}^{k+1} - \nabla f_i(x^\star)\right\|^{2} \mid x^{k}, h^{k}\right] 
    \\ &=
    \E\left[\frac{1}{n} \sum_{i=1}^{n} \omega_i \left\|h_i^k + \alpha m_i^k - h_i^\star \right\|^{2} \mid x^{k}, h^{k}\right] 
    \\ &=
    \frac{1}{n} \sum_{i=1}^{n} \omega_i \left(\left\|h_i^{k} - h_i^{\star}\right\|^{2} + 2 \alpha \left\langle \E m_i^k, h_i^{k} - h_i^{\star}\right\rangle + \alpha^2 \E\left[\left\| m_i^{k}\right\|^{2} \mid x^{k}, h^{k}\right] \right) 
    \\ &=
    \sigma^k + \frac{1}{n} \sum_{i=1}^{n} \omega_i \left( 2 \alpha \left\langle \nabla f_i(x^k) - h_i^k, h_i^{k} - h_i^{\star}\right\rangle + \alpha^2 \E\left[\left\| m_i^{k}\right\|^{2} \mid x^{k}, h^{k}\right] \right) 
    \\ &\stackrel{(\ref{eq:second_moment}, \ref{eq:omega_induced})}{\leq}
    \sigma^k + \frac{1}{n} \sum_{i=1}^{n} \omega_i \br{2\alpha \left\langle \nabla f_i(x^k) - h_i^k, h_i^{k} - h_i^{\star}\right\rangle + \alpha^2 \br{1 + \omega_i (1 - \delta_i} \left\|\nabla f_i(x^k) - h_i^k\right\|^2}
    \\ &\stackrel{(\ast)}{\leq}
    \sigma^k + \frac{1}{n} \sum_{i=1}^{n}  \omega_i \alpha \br{ 2 \left\langle \nabla f_i(x^k) - h_i^k, h_i^{k} - h_i^{\star}\right\rangle + \left\|\nabla f_i(x^k) - h_i^k\right\|^{2} }
    \\ &\stackrel{\eqref{eq:dot_product}}{=}
    \sigma^k + \frac{1}{n} \sum_{i=1}^{n} \alpha  \omega_i \br{ \sqN{\nabla f_i(x^k) - h_i^{\star}}-\left\|h_i^k-h_i^\star\right\|^2}
    \\ &=
    \sigma^k + \frac{1}{n} \sum_{i=1}^{n} \alpha  \omega_i \sqN{\nabla f_i(x^k) - h_i^{\star}} - \alpha \sigma^k
    \\ &\stackrel{\eqref{eq:smooth_bregman}}{\leq}
    (1 - \alpha) \sigma^k + \frac{1}{n} \sum_{i=1}^{n} \alpha \omega_i \cdot 2 L_i D_{f_i}(x^k, x^{\star})
    \\ &\leq
    (1 - \alpha) \sigma^k + 2\alpha\max_i(L_i \omega_i) D_f(x^k, x^{\star}),
\end{align*}
where ($\ast$) follows from the condition on the step-size $\alpha \leq \frac{1}{1 + \omega_i \br{1 - \delta_i}}$ (for every $i$).
\end{proof}

Now, by direct application of~\cite[Theorem 4.1]{sigma_k}, we obtain the statement of Theorem~\ref{thm:diana_shift}.

\subsection{Proof of Theorem~\ref{thm:r_diana_shift} (\algname{Randomized-DIANA})}
In short, \RDIANA is defined by the shift update
\begin{equation*}
\begin{aligned} 
    h_i^k &= \nabla f_i (w_i^k) \\
    w_i^{k+1} &= \left\{\begin{array}{ll}
                    x^k \, \text { with probability } p_i \\ 
                    w_i^k \text { with probability } 1 - p_i
                  \end{array}\right.,
\end{aligned}
\end{equation*}
which is similar to the gradient estimator structure of Loopless-SVRG \citep{kovalev2020loopless}.

The proof of Theorem~\ref{thm:r_diana_shift} is also based on~\cite[Theorem 4.1]{sigma_k}, which requires a modified version of Lemma~\ref{lemma:DIANA_sigmak}.

\begin{lemma}
Assume that functions $f_i$ are convex and $L_i$-smooth for all $i$, and $\cQ_i \in \bU(\omega)$ for all $i$. Let $h = \left(h_{1}, h_{2}, \ldots, h_{n}\right) \in \mathbb{R}^{d} \times \mathbb{R}^{d} \ldots \times \mathbb{R}^{d}=\mathbb{R}^{n d}$ and define $\sigma: \mathbb{R}^{n d} \rightarrow[0, \infty)$ and $\sigma^{k}$ by
\begin{equation*}
    \sigma(h) = \frac{1}{n} \sum_{i=1}^n \left\|h_i - \nabla f_i(x^\star)\right\|^2 \quad \sigma^k \eqdef \sigma(h^k) = \frac{1}{n} \sum_{i=1}^{n}\left\|h_i^k - \nabla f_i (x^\star)\right\|^2.
\end{equation*}
Then, for all iterations of \algname{Rand-DIANA}
we have
\begin{equation*}
\begin{aligned} 
    \E\left[g^{k} \mid x^{k}, h^{k}\right] &= \nabla f(x^k), \\
    \E\left[\left\|g^{k}-\nabla f(x^\star)\right\|^{2} \mid x^{k}, h^{k}\right] 
    &\leq 2 \left(\frac{2 \omega}{n}+1\right) L_{\max } D_f(x^k, x^\star) + \frac{2 \omega}{n} \sigma^k, \\
    \E\left[\sigma^{k+1} \mid x^{k}, h^{k}\right] & \leq 2 \max_i(p_i L_i) D_f(x^k, x^\star) + (1 - \min_i p_i) \sigma^k. 
\end{aligned}
\end{equation*}
\end{lemma}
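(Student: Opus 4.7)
The lemma has three assertions and I would prove them in order. The first (unbiasedness) is a one-line check using $\E\cQ_i(x)=x$ and worker independence. For the second (expected smoothness), I would \emph{specialize} the variance argument from the \DIANA lemma above to the case $\omega_i\equiv\omega$, noting that the sigma function here carries no $\omega$-weighting. The third assertion needs a genuinely new (but elementary) recursion coming from the Bernoulli update rule for $w_i^{k+1}$.

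\textbf{Step 1 (unbiasedness).} Conditioning on $(x^k,h^k)$ and using independence of the $\cQ_i$ gives
\[
\Exp{g^k \mid x^k, h^k}
= \frac{1}{n}\sum_{i=1}^n \br{h_i^k + \Exp{\cQ_i(\nabla f_i(x^k)-h_i^k) \mid x^k, h^k}}
= \frac{1}{n}\sum_{i=1}^n \nabla f_i(x^k)
= \nabla f(x^k).
\]

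\textbf{Step 2 (expected smoothness of $g^k$).} I would apply variance decomposition \eqref{eq:variance_decomposition} to split
\[
\E\sqN{g^k - \nabla f(x^\star)} = \E\sqN{g^k - \nabla f(x^k)} + \sqN{\nabla f(x^k) - \nabla f(x^\star)},
\]
use \eqref{eq:smooth_bregman} together with $L\le L_{\max}$ to bound the second term by $2L_{\max}D_f(x^k,x^\star)$, and then expand the first term using independence of the compressors across workers followed by Definition~\ref{def:unbiased_compressor}:
\[
\E\sqN{g^k - \nabla f(x^k)} \leq \frac{\omega}{n^{2}}\sum_{i=1}^n \sqN{\nabla f_i(x^k) - h_i^k}.
\]
Splitting $\nabla f_i(x^k)-h_i^k$ around $\nabla f_i(x^\star)$ via \eqref{eq:triangle}, applying \eqref{eq:smooth_bregman} to the resulting gradient piece, and recognising $\tfrac{1}{n}\sum_i\|h_i^k-\nabla f_i(x^\star)\|^2=\sigma^k$ on the shift piece yields exactly $2\br{\tfrac{2\omega}{n}+1}L_{\max}D_f(x^k,x^\star)+\tfrac{2\omega}{n}\sigma^k$.

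\textbf{Step 3 (the $\sigma^{k+1}$ recursion).} This is the part that diverges from (and in fact simplifies) the \DIANA proof: the shifts $h_i^k=\nabla f_i(w_i^k)$ are \emph{exact} (uncompressed) past gradients, so the recursion is driven purely by the Bernoulli switch. Since $w_i^{k+1}=x^k$ with probability $p_i$ and $w_i^k$ otherwise, independently of everything else given $(x^k,h^k)$,
\[
\Exp{\sqN{h_i^{k+1}-\nabla f_i(x^\star)} \mid x^k, h^k}
= p_i\sqN{\nabla f_i(x^k)-\nabla f_i(x^\star)} + (1-p_i)\sqN{h_i^k-\nabla f_i(x^\star)}.
\]
Applying \eqref{eq:smooth_bregman} to the first term, averaging over $i$, and pulling the suprema out of each sum gives
\[
\Exp{\sigma^{k+1}\mid x^k,h^k}\leq 2\max_i(p_iL_i)\,D_f(x^k,x^\star)+(1-\min_i p_i)\,\sigma^k.
\]

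\textbf{Anticipated obstacle.} There is no real analytical difficulty; the main thing to keep straight is the \emph{separation of the two independent sources of randomness}, namely the compressors $\cQ_i$ (which only affect $g^k$) and the Bernoulli switches (which only affect $h_i^{k+1}$). Once the conditioning is handled cleanly, all three bounds reduce to chaining the basic inequalities \eqref{eq:triangle}, \eqref{eq:variance_decomposition}, \eqref{eq:smooth_bregman} and the bounded-variance property of $\cQ_i$, avoiding the additional $\alpha$-bookkeeping that the \DIANA lemma required.
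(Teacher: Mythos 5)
Your proposal is correct and follows essentially the same route as the paper: unbiasedness by direct computation, the expected-smoothness bound obtained by specializing the \DIANA{} variance argument to $\omega_i\equiv\omega$ with the unweighted $\sigma^k$, and the $\sigma^{k+1}$ recursion driven purely by the Bernoulli switch with $L_i$-smoothness applied to the $p_i$-branch before pulling out $\max_i(p_iL_i)$ and $\min_i p_i$. No gaps.
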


\begin{proof}
The \textbf{1) unbiasedness} of the shifted gradient estimator was already shown in~\ref{section:appendix:fixed_shift}.

\textbf{2) Expected smoothness:} Exactly the same as in Lemma~\ref{lemma:DIANA_sigmak} with simplified $\sigma^k$ (without $\omega_i$) and  $\omega_i \equiv \omega$.

\textbf{3) Sigma-$k$ recursion:}
\begin{equation*}
\begin{aligned}
    \Exp{\sigma^{k+1} \mid x^k, h^k} 
    &= \frac{1}{n} \sum_{i=1}^n \Exp{\|h_i^{k+1} - h_i^\star\|^2 \mid x^k, h^k} \\
    &= \Exp{\frac{1}{n} \sum_{i=1}^n \|\nabla f_i(w_i^{k+1}) - \nabla f_i(x^\star)\|^2 \mid x^k, h^k}
    \\&= 
    \frac{1}{n} \sum_{i=1}^n\left[ p_i \|\nabla f_i(x^k) - \nabla f_i(x^\star)\|^2
    + (1 - p_i) \|\nabla f(w_i^k) - \nabla f_i(x^\star)\|^2 \right]
    \\&\leq 
    \frac{1}{n} \sum_{i=1}^n p_i \cdot 2 L_i D_{f_i}(x_k, x^\star)
    + \max_i (1 - p_i) \frac{1}{n} \sum_{i=1}^n \|\nabla f(w_i^k) - \nabla f_i(x^\star)\|^2
    \\&\leq 
    2 \max_i (p_i L_i) D_f(x_k, x^\star)
    + (1 - \min_i p_i) \sigma^k.
\end{aligned}
\end{equation*}
Applying Theorem 4.1 from \cite{sigma_k}, we obtain the statement of Theorem~\ref{thm:r_diana_shift}.
\end{proof}

\newpage 

\subsection{Proof of Theorem~\ref{thm:gdci}  (\algname{GDCI})}
This analysis and Subsection \ref{subsection:vr_gdci} rely on Lemma~\ref{lem:iter_compression} and results from Subsection~\ref{section:iter_compr}.

Distributed Gradient Descent with Compressed Iterates (\algname{GDCI}) has the form
\begin{equation*} 
    x^{k+1} = 
    x^k - \br{\eta \gamma} \frac{1}{\gamma} \sbr{x^k - \frac{1}{n} \sum_{i=1}^n \cQ_i \br{x^k - \gamma \nabla f_i(x^k)}}
    = x^k - (\eta \gamma) \tilde{\cQ}^k(\nabla f(x^k)).
\end{equation*}
where shifted compressor $\tilde{\cQ}^k(\nabla f(x^k))$ belongs to class $\bU\br{\omega;\, x^k/\gamma}$ for $\cQ_i \in \bU(\omega)$

\textbf{Convergence analysis} for the non-regularized case $\br{\nabla f(x^\star)=0}$.

\begin{proof}
Expectation conditional on $x^k$:
\begin{equation} \label{eq:gdci_bound}
\begin{aligned}
    \E \sqn{r^{k+1}} &\eqdef 
    \E \sqn{x^{k+1} - x^\star} 
    \\&\stackrel{\eqref{eq:distributed_gdci}}{=}
    \E \sqN{x^k - \eta \gamma \tilde{\cQ}^k(\nabla f(x^k)) - x^\star} 
    \\&= 
    \E \sqN{x^k - \eta\gamma \tilde{\cQ}^k(\nabla f(x^k)) - \br{x^\star - \eta\gamma \nabla f(x^\star)}}
    \\&=
    \sqn{r^k} + (\eta\gamma)^2 \E\sqN{\tilde{\cQ}^k(\nabla f(x^k)) - \nabla f(x^\star)}
    - 2\eta\gamma \<x^k - x^\star, \nabla f(x^k) - \nabla f(x^\star)>
    \\&\stackrel{\eqref{eq:variance_decomposition}}{=}
    \sqn{r^k} + (\eta\gamma)^2 \left[\E\sqN{\tilde{\cQ}^k(\nabla f(x^k)) - \nabla f(x^k)} + \sqN{\nabla f(x^k) - \nabla f(x^\star)}\right] 
    \\&\quad - 2\eta\gamma \<x^k - x^\star, \nabla f(x^k) - \nabla f(x^\star)>.
\end{aligned}
\end{equation}
Next, we upper-bound the variance of $\tilde{\cQ}^k(\nabla f(x^k))$ 
\begin{align*}
    \E\sqN{\tilde{\cQ}^k(\nabla f(x^k)) - \nabla f(x^k)} 
    &= 
    \E\sqN{\frac{1}{\gamma} \sbr{x^k - \frac{1}{n} \sum_{i=1}^n \cQ_i \br{x^k - \gamma \nabla f_i(x^k)}} - \frac{1}{n} \sum_{i=1}^n \nabla f_i(x^k)}
    \\ &= 
    \E\sqN{\frac{1}{n^2} \sum_{i=1}^n \br{\nabla f_i(x^k) - \frac{1}{\gamma} \sbr{x^k - \cQ_i \br{x^k - \gamma \nabla f_i(x^k)}}}}
    \\ \left[\text{independence of } \cQ_i\right] &\leq
    \frac{1}{n^2} \sum_{i=1}^n \E\sqN{\nabla f_i(x^k) - \frac{1}{\gamma} \sbr{x^k - \cQ_i \br{x^k - \gamma \nabla f_i(x^k)}}} 
    \\ &\stackrel{\ref{def:unbiased_compressor}}{\leq}
    \frac{1}{n^2} \sum_{i=1}^n \omega \sqN{\nabla f_i(x^k) - \frac{1}{\gamma} x^k}
    \\ &=
    \frac{1}{n^2}\sum_{i=1}^n\frac{\omega}{\gamma^2} \sqN{x^k - \gamma \nabla f_i(x^k)}
    \\ &=
    \frac{\omega}{n^2\gamma^2} \sum_{i=1}^n \sqN{x^k - \gamma \nabla f_i(x^k) \pm \br{x^\star - \gamma \nabla f_i(x^\star)}}
    \\ &\stackrel{\eqref{eq:triangle}}{\leq}
    \frac{\omega}{n^2\gamma^2} \sum_{i=1}^n 2 \sbr{\sqN{x^k - \gamma \nabla f_i(x^k) - \br{x^\star - \gamma \nabla f_i(x^\star)}} + \sqN{x^\star - \gamma \nabla f_i(x^\star)}}
    \\ &=
    \frac{2\omega}{n^2\gamma^2} \sum_{i=1}^n \Big[\sqN{x^k - x^\star} - 2\gamma \<x^k - x^\star, \nabla f_i(x^k) - \nabla f_i(x^\star)>
    \\&\qquad + \gamma^2 \sqN{\nabla f_i(x^k) - \nabla f_i(x^\star)} + \sqN{x^\star - \gamma \nabla f_i(x^\star)}\Big]
    \\ &\stackrel{(\ref{eq:smooth_bregman},\ref{eq:conv_bregman})}{\leq}
    \frac{2\omega}{n^2\gamma^2} \sum_{i=1}^n \Big[\sqN{r^k} - 2\gamma \br{D_{f_i} (x^k, x^\star) + \frac{\mu}{2} \sqn{x^k - x^\star}}
    \\&\quad + \gamma^2 2 L_i D_{f_i} (x^k, x^\star) + \sqN{x^\star - \gamma \nabla f_i(x^\star)}\Big]
    \\ &=
    \frac{2\omega}{n^2\gamma^2} \sum_{i=1}^n \Big[(1 - \gamma\mu)\sqN{r^k} - 2\gamma \br{1 - \gamma L_i} D_{f_i} (x^k, x^\star) + \sqN{x^\star - \gamma \nabla f_i(x^\star)}\Big]
    \\ &\leq
    \frac{2\omega}{n\gamma^2} (1 - \gamma\mu)\sqN{r^k} + \frac{2\omega}{n^2\gamma^2} \sum_{i=1}^n \sqN{x^\star - \gamma \nabla f_i(x^\star)} 
    \\ & \quad - \frac{2\omega}{n\gamma^2} \cdot 2\gamma \br{1 - \gamma L_{\max}} D_f (x^k, x^\star).
\end{align*}

Combining it with~\eqref{eq:gdci_bound} and using notation $\cT_i(x) \eqdef x - \gamma \nabla f_i(x)$ we get
\begin{align*}
    \E \sqn{r^{k+1}} 
    &\leq
    \sqn{r^k} - 2\eta\gamma \<x^k - x^\star, \nabla f(x^k) - \nabla f(x^\star)> + (\eta\gamma)^2\sqN{\nabla f(x^k) - \nabla f(x^\star)} 
    \\&\qquad +
    \frac{2\omega\eta^2}{n} \sbr{(1 - \gamma\mu)\sqN{r^k} - 2 \gamma \br{1 - \gamma L_{\max}} D_f (x^k, x^\star) + \frac{1}{n} \sum_{i=1}^n \sqN{\cT_i(x^\star)}}
    \\&\stackrel{(\ref{eq:smooth_bregman},\ref{eq:conv_bregman})}{\leq}
    \sqn{r^k} - 2\eta\gamma \sbr{D_f(x^k, x^\star) + \frac{\mu}{2} \sqn{x^k - x^\star}} + (\eta\gamma)^2 \cdot 2 L D_f(x^k, x^\star)
    \\&\qquad +
    \frac{2\omega\eta^2}{n} \sbr{(1 - \gamma\mu)\sqN{r^k} - 2 \gamma \br{1 - \gamma L_{\max}} D_f (x^k, x^\star) + \frac{1}{n} \sum_{i=1}^n \sqN{\cT_i(x^\star)}}
    \\&=
    \sbr{1 - \eta\gamma\mu + \frac{2\omega\eta^2}{n} \br{1 - \gamma\mu}} \sqn{r^k} + \frac{2\omega\eta^2}{n} \frac{1}{n} \sum_{i=1}^n \sqN{\cT_i(x^\star)}
    \\&\quad - 2\eta\gamma\sbr{1 - \eta\gamma L + \frac{2\omega\eta}{n}\br{1 - \gamma L_{\max}}} D_f (x^k, x^\star),
\end{align*}
which, after choosing the step-size
\begin{equation*}
    \gamma \leq \frac{1 + 2\eta\omega/n}{\eta\br{L + 2 L_{\max}\omega/n}}
\end{equation*}
and optimizing over $\gamma$ and $\eta$ (to maximize the contraction term before $\sqn{r^k}$), leads to
\begin{equation*}
    \Exp{\sqn{x^{k+1} - x^\star} \mid x^k} \leq 
    \br{1 - \eta} \sqn{r^k} + \frac{2\omega\eta^2}{n} \frac{1}{n} \sum_{i=1}^n \sqN{\cT_i(x^\star)},
\end{equation*}
for 
\begin{equation*}
    \eta \leq \sbr{\frac{L}{\mu} + \frac{2\omega}{n}\br{\frac{L_{\max}}{\mu} - 1}}^{-1},
\end{equation*}

And after unrolling the recursion and standard simplifications, we obtain the desired result
\begin{equation*}
    \E \sqn{x^k - x^\star} \leq 
    \br{1 - \eta}^k \sqN{x^0 - x^\star} + \frac{2\omega\eta}{n} \frac{1}{n} \sum_{i=1}^n \sqN{x^\star - \gamma \nabla f_i(x^\star)}.
\end{equation*}

\end{proof}

\newpage

\subsection{Variance-Reduced Gradient Descent with Compressed Iterates  (\algname{VR-GDCI})} \label{subsection:vr_gdci}

In this section, we describe the method first introduced by \cite{chraibi2019distributed} and show its improved analysis.

\begin{algorithm}[H]
\begin{algorithmic}[1] \caption{Distributed Variance-Reduced Gradient Descent with Compressed Iterates (\algname{VR-GDCI})} \label{alg:vr_gdci}
\State \textbf{Parameters:} learning rates $\alpha, \gamma, \eta > 0$; compressors $\cQ_i \in \bU(\omega)$, initial iterate $x^0 \in \bR^d$, initial local shifts $h_1^0, \ldots, h_n^0 \in \bR^d$ (stored on the $n$ nodes)
\State \textbf{Initialize:} $h^0 = \frac{1}{n} \sum_{i=1}^{n} h_i^0$ (stored on the master)
\For{$k = 0, 1, 2 \ldots$}
\State Broadcast $x^k$ to all workers
\For{$i = 1, \ldots n$} in parallel
    \State Compress shifted local model $\delta_i^{k+1} = \cQ_i \br{x^k - \gamma \nabla f_i(x^k) - h_i^k}$
    \State $\mathcolorbox{myred}{\text{Update the local shift: } h_i^{k+1} = h_i^k + \alpha \delta_i^{k+1}}$
    \State Send message $\delta_i^{k+1}$ to the master
    \EndFor
    \State Aggregate received messages: $\delta^{k+1} = \frac{1}{n} \sum_{i=1}^{n} \delta_i^{k+1}$
    \State $\mathcolorbox{myred}{\text{Update aggregated shift: } h^{k+1} = h^k + \alpha \delta^{k+1}}$
    \State Compute $\Delta^{k+1} = \delta^{k+1} + h^k$
    \State Take \say{model} step: $x^{k+1} = (1 - \eta) x^k + \eta \Delta^{k+1}$
\EndFor
\end{algorithmic}
\end{algorithm}

We can rewrite \algname{VR-GDCI} (Algorithm~\ref{alg:vr_gdci}) in the following equivalent way

\begin{equation*}
\begin{aligned}
    \delta^{k+1} &= \frac{1}{n} \sum_{i=1}^n \delta_i^{k+1} = \frac{1}{n} \sum_{i=1}^n \cQ_i\br{\cT_i(x^k) - h_i^k}, \\
    h^{k+1} &= h^k + \alpha \delta^k,
\end{aligned}
\end{equation*}

which leads to the update rule:
\begin{equation*}
\begin{aligned}
    x^{k+1} &= x^k - \eta \left(x^k - h^k -\delta^k\right) \\&= 
    x^k - (\eta\gamma) \frac{1}{\gamma} \left(x^k - h^k - \frac{1}{n} \sum_{i=1}^n \cQ_i \br{x^k - h_i^k - \gamma \nabla f_i(x^k)}\right) \\&= 
    x^k - (\eta\gamma) \tilde{\cQ}^k (\nabla f(x^k)).
\end{aligned}
\end{equation*}

\begin{theorem} \label{thm:vr_gdci}
Let $\Psi^k$ be the following Lyapunov function:
\begin{equation*}
    \Psi^k \eqdef \sqn{x^k - x^\star} + \frac{4 \eta^2 \omega}{\alpha n} \sigma^k, \qquad \sigma^k \eqdef \frac{1}{n} \sum_{i=1}^n \sqN{h_i^k - \cT_i(x^\star)}.
\end{equation*}
Suppose that $f$ is $L$-smooth and $\mu$-strongly convex. Choose the step-sizes $\alpha, \eta, \gamma$, such that 
\begin{equation*}
    \alpha \leq \frac{1}{\omega + 1}, \qquad
    \eta = \sbr{\frac{L}{\mu} + \frac{6\omega}{n}\br{\frac{L_{\max}}{\mu} - 1}}^{-1}, \qquad
    \gamma \leq \frac{1 + 6\omega\eta/n}{\eta (L + 6 L_{\max}\omega/n)}.
\end{equation*}
Then, the iterates defined by Algorithm~\ref{alg:vr_gdci} satisfy
\begin{equation*}
    \E \Psi^k \leq \br{1 - \min\left\{\frac{\alpha}{2}, \eta \right\}}^k \sqn{x^0 - x^\star} \Psi^0.
\end{equation*}
\end{theorem}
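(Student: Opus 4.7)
The plan is to follow the same three-step template used in Theorem~\ref{thm:diana_shift} (unbiasedness, expected smoothness, and a $\sigma^k$-recursion), applied now to the shifted compressor $\tilde{\cQ}^k$ associated with the gradient mapping $\cT_i(x) \eqdef x - \gamma \nabla f_i(x)$ rather than with $\nabla f_i$ directly. First, I would rewrite Algorithm~\ref{alg:vr_gdci} as $x^{k+1} = x^k - \eta \gamma\, g^k$ with
\begin{equation*}
    g^k \eqdef \tilde{\cQ}^k(\nabla f(x^k)) = \frac{1}{\gamma n} \sum_{i=1}^n \Bigl[ (x^k - h_i^k) - \cQ_i\bigl(x^k - h_i^k - \gamma \nabla f_i(x^k)\bigr) \Bigr],
\end{equation*}
and note that $\E[g^k\mid x^k, h^k] = \nabla f(x^k)$ by unbiasedness of each $\cQ_i$.

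Second, I would bound $\E \sqN{g^k - \nabla f(x^k)}$. Independence of the $\cQ_i$ and the $\omega$-variance bound reduce this to $\tfrac{\omega}{n^2\gamma^2}\sum_{i=1}^n \sqN{\cT_i(x^k) - h_i^k}$. Splitting around $\cT_i(x^\star)$ via~\eqref{eq:triangle} produces one term proportional to $\sigma^k$ and another equal to $\tfrac{1}{n}\sum_i \sqN{\cT_i(x^k) - \cT_i(x^\star)}$. Expanding the latter as in the proof of Theorem~\ref{thm:gdci} and applying~\eqref{eq:conv_bregman}--\eqref{eq:smooth_bregman} produces a bound of the form $C_1(\gamma) D_f(x^k, x^\star) + C_2(\gamma)\sqn{x^k-x^\star}$. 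Combining with~\eqref{eq:variance_decomposition} gives a DIANA-style upper bound on $\E\sqN{g^k - \nabla f(x^\star)}$ involving $D_f(x^k, x^\star)$, $\sqn{x^k - x^\star}$, and $\sigma^k$.

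Third, I would derive the $\sigma^k$ recursion. Since $h_i^{k+1} = h_i^k + \alpha \cQ_i\bigl(\cT_i(x^k) - h_i^k\bigr)$ is precisely the DIANA shift update with $\nabla f_i$ replaced by $\cT_i$, the same algebra as in Lemma~\ref{lemma:DIANA_sigmak} — setting $u_i = h_i^k - \cT_i(x^\star)$, $v_i = \cT_i(x^k) - h_i^k$, invoking~\eqref{eq:second_moment} and~\eqref{eq:dot_product}, and using $\alpha \leq 1/(\omega+1)$ to kill the $\alpha^2(\omega+1)\sqn{v_i}$ term — yields
\begin{equation*}
    \E[\sigma^{k+1} \mid x^k, h^k] \leq (1-\alpha)\,\sigma^k + \alpha \cdot \frac{1}{n}\sum_{i=1}^n \sqN{\cT_i(x^k) - \cT_i(x^\star)},
\end{equation*}
and the last term is controlled exactly as in step two.

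Finally, I would plug the variance estimate into the one-step descent identity for $\sqn{x^{k+1} - x^\star}$ (expanded around the unbiased direction $\nabla f(x^k)$ and using~\eqref{eq:conv_bregman}), then form the weighted sum with the $\sigma^k$-recursion using the coefficient $\tfrac{4\eta^2\omega}{\alpha n}$ chosen in $\Psi^k$. Under the stated step-size conditions on $\alpha$, $\gamma$, and $\eta$, the Bregman-divergence cross terms are non-positive and cancel, leaving a contraction with rate $1 - \min\{\alpha/2, \eta\}$. The main obstacle is step two: one must bound $\tfrac{1}{n}\sum_i \sqN{\cT_i(x^k) - \cT_i(x^\star)}$ sharply enough to absorb the $1/\gamma^2$ prefactor in the variance of $\tilde{\cQ}^k$ while still leaving the $-\alpha\sigma^k$ contraction dominant; balancing these two competing requirements is exactly what dictates the particular form of the step-size constraints on $\gamma$ and $\eta$ in the statement.
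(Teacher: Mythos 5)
Your proposal is correct and follows essentially the same route as the paper's proof: rewrite the update as $x^{k+1} = x^k - \eta\gamma\,\tilde{\cQ}^k(\nabla f(x^k))$ with $\tilde{\cQ}^k_i \in \bU\bigl(\omega; (x^k - h_i^k)/\gamma\bigr)$, bound its variance by a $\sigma^k$ term plus $\tfrac{1}{n}\sum_i \sqN{\cT_i(x^k)-\cT_i(x^\star)}$, derive the $(1-\alpha)$-contraction for $\sigma^k$ exactly as in the \DIANA{} lemma with $\nabla f_i$ replaced by $\cT_i$, and combine via the stated Lyapunov weight. The paper does precisely this, including the absorption of the $1/\gamma^2$ prefactor by the $(\eta\gamma)^2$ factor and the final balancing that yields the $1-\min\{\alpha/2,\eta\}$ rate.
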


\begin{proof}
The beginning of the analysis is exactly the same as in the previous section \eqref{eq:gdci_bound}. \\
Expectation conditional on $x^k$ and $h^k$
\begin{equation} \label{eq:vr_gdci_bound}
\begin{aligned}
    \E \sqn{r^{k+1}} \eqdef 
    \E \sqn{&x^{k+1} - x^\star}
    =
    \sqn{r^k} - 2\eta\gamma \<x^k - x^\star, \nabla f(x^k) - \nabla f(x^\star)>
    \\&+ (\eta\gamma)^2 \Big[\E \underbracket{\sqN{\tilde{\cQ}^k(\nabla f(x^k)) - \nabla f(x^k)}}_{\tau^k} + \sqN{\nabla f(x^k) - \nabla f(x^\star)}\Big].
\end{aligned}
\end{equation}
Let $\cT_i(x) \eqdef x - \gamma \nabla f_i(x)$. For term $\tau^k$, we employ a similar upper bound using the fact that $\tilde{\cQ}_i^k \in \bU(\omega; \br{x^k - h_i^k}/\gamma)$
\begin{equation} \label{eq:vr-gdci_x-bound}
\begin{aligned}
    \Exp{\sqN{\tilde{\cQ}^k(\nabla f(x^k)) - \nabla f(x^k)} \mid x^k, h^k} &\leq 
    \frac{\omega}{n^2} \sum_{i=1}^n \sqN{\nabla f_i(x^k) - \br{x^k - h_i^k}/\gamma}
    \\&=
    \frac{\omega}{n^2} \frac{1}{\gamma^2} \sum_{i=1}^n \sqN{x^k - h_i^k - \gamma \nabla f_i(x^k) \pm \cT_i(x^\star)}
    \\&\stackrel{\eqref{eq:triangle}}{\leq}
    \frac{2\omega}{n^2 \gamma^2} \sum_{i=1}^n \sqN{\cT_i(x^k) - \cT_i(x^\star)}
    \\&\qquad+
    \frac{2\omega}{\gamma^2 n} \underbracket{\frac{1}{n} \sum_{i=1}^n \sqN{h_i^k - \cT_i(x^\star)}}_{\sigma^k}.
\end{aligned}
\end{equation}

Next, we upper-bound the $\sigma^{k+1}$ term (expectation conditional on $x^k$ and $h^k$):
\begin{equation*}
\begin{aligned}
    \E\sigma^{k+1} &= \frac{1}{n} \sum_{i=1}^n \E \sqN{h_i^{k+1} - \cT_i(x^\star)}
    \\&=
    \frac{1}{n} \sum_{i=1}^n \E \sqN{h_i^k + \alpha \cQ_i \br{\cT_i(x^k) - h_i^k} - \br{x^\star - \gamma \nabla f_i(x^\star)}}
    \\&=
    \frac{1}{n} \sum_{i=1}^n \Big[\sqn{h_i^k - \cT_i(x^\star)} + \alpha^2 \E\sqN{\cQ_i\br{\cT_i(x^k) - h_i^k}} + 2\alpha \<h_i^k - \cT_i(x^\star), \cT_i(x^k) - h_i^k> \Big]
    \\& \stackrel{\ref{def:unbiased_compressor}}{\leq}
    \sigma^k + \frac{1}{n} \sum_{i=1}^n \sbr{\alpha^2 (\omega + 1) \sqN{\cT_i(x^k) - h_i^k} - 2\alpha \<h_i^k - \cT_i(x^\star), h_i^k - \cT_i(x^k)>}
    \\ &\stackrel{(\ast)}{\leq}
    \sigma^k + \frac{1}{n} \sum_{i=1}^n \sbr{\alpha \sqN{\cT_i(x^k) - h_i^k} - 2\alpha \<h_i^k - \cT_i(x^\star), h_i^k - \cT_i(x^k)>}
    \\& \stackrel{\eqref{eq:dot_product}}{=}
    \sigma^k + \frac{1}{n} \sum_{i=1}^n \alpha\sbr{ \sqN{\cT_i(x^k) - \cT_i(x^\star)} - \sqN{h_i^k - \cT_i(x^\star)}}
    \\&=
    (1 - \alpha) \sigma^k + \alpha \frac{1}{n} \sum_{i=1}^n \sqN{\cT_i(x^k) - \cT_i(x^\star)},
\end{aligned}
\end{equation*}
where ($\ast$) follows from $\alpha \leq \nicefrac{1}{(\omega+1)}$.

Simplification of the second term (same as the first term in~\eqref{eq:vr-gdci_x-bound})
\begin{equation} \label{eq:unroll_mapping}
\begin{aligned}
    \frac{1}{n} \sum_{i=1}^n \sqN{\cT_i(x^k) - \cT_i(x^\star)} 
    &= 
    \frac{1}{n} \sum_{i=1}^n \sqN{x^k - \gamma \nabla f_i(x^k) - \br{x^\star - \gamma \nabla f_i(x^\star)}}
    \\&= 
    \frac{1}{n} \sum_{i=1}^n \Big[\sqN{x^k - x^\star} + \gamma^2 \sqN{\nabla f_i(x^k) - \nabla f_i(x^\star)} 
    \\&\qquad - 2\gamma \<x^k - x^\star, \nabla f_i(x^k) - \nabla f_i(x^\star)> \Big]
    \\ &\stackrel{(\ref{eq:smooth_bregman},\ref{eq:conv_bregman})}{\leq}
    \sqN{r^k} + \frac{1}{n} \sum_{i=1}^n \Big[\gamma^2 2 L_i D_{f_i} (x^k, x^\star)
    - 2\gamma \br{D_{f_i} (x^k, x^\star) + \frac{\mu}{2} \sqn{x^k - x^\star}}\Big]
    \\ &=
    (1 - \gamma\mu)\sqN{r^k} - 2\gamma \cdot \frac{1}{n} \sum_{i=1}^n \br{1 - \gamma L_i} D_{f_i} (x^k, x^\star)
    \\ &\leq
    (1 - \gamma\mu) \sqN{x^k - x^\star} - 2\gamma \cdot \br{1 - \gamma L_{\max}} D_f (x^k, x^\star).
\end{aligned}
\end{equation}

Combining the obtained bounds for $\sqn{r^{k+1}}$ and $\sigma^{k+1}$, we obtain (expectation conditional on $x^k, h^k$):
\begin{equation*}
\begin{aligned}
    \E\Psi^{k+1} &\eqdef \E\sqN{x^{k+1} - x^\star} + \frac{4\eta^2\omega}{\alpha n} \E\sigma^{k+1}
    \\&\leq
    \sqn{r^k} - 2\eta\gamma \<x^k - x^\star, \nabla f(x^k) - \nabla f(x^\star)> + (\eta\gamma)^2 \sqN{\nabla f(x^k) - \nabla f(x^\star)}
    \\&\qquad+ (\eta\gamma)^2 \sbr{\frac{2\omega}{n^2 \gamma^2} \sum_{i=1}^n \sqN{\cT_i(x^k) - \cT_i(x^\star)} +
    \frac{2\omega}{\gamma^2 n} \sigma^k}
    \\&\qquad+
    \frac{4\eta^2\omega}{\alpha n} \sbr{(1 - \alpha) \sigma^k + \alpha \frac{1}{n} \sum_{i=1}^n \sqN{\cT_i(x^k) - \cT_i(x^\star)}}
    \\&\stackrel{(\ref{eq:conv_bregman}, \ref{eq:smooth_bregman})}{\leq}
    (1 - \eta\gamma\mu) \sqn{r^k} - 2\eta\gamma \br{1 - L\eta\gamma} D_f(x^k, x^\star) 
    \\&\qquad + \frac{4\eta^2\omega}{\alpha n} \br{1 - \frac{\alpha}{2}} \sigma^k + \frac{6\omega\eta^2}{\alpha n} \frac{1}{n} \sum_{i=1}^n \sqN{\cT_i(x^k) - \cT_i(x^\star)}
    \\&\stackrel{\eqref{eq:unroll_mapping}}{\leq}
    \sbr{1 - \mu\eta\gamma + \frac{6\omega\eta^2}{n} \br{1 - \gamma\mu}} \sqn{r^k} + \frac{4\eta^2\omega}{\alpha n} \br{1 - \frac{\alpha}{2}} \sigma^k
    \\&\qquad- 2\eta\gamma \sbr{1 - L\eta\gamma + \frac{6\eta\omega}{n} \br{1 - \gamma L_{\max}}} D_f(x^k, x^\star)
    \\&\stackrel{(\ast)}{\leq}
    \sbr{1 - \mu\eta\gamma + \frac{6\omega\eta^2}{n} \br{1 - \gamma\mu}} \sqn{r^k} + \frac{4\eta^2\omega}{\alpha n} \br{1 - \frac{\alpha}{2}} \sigma^k
    \\&\leq
    \br{1 - \eta} \sqn{x^k - x^\star} + \frac{4\eta^2\omega}{\alpha n} \br{1 - \frac{\alpha}{2}} \sigma^k,
\end{aligned}
\end{equation*}
where $(\ast)$ follows from the condition on $\gamma$
\begin{equation*}
    \gamma \leq \frac{1 + \frac{6\omega}{n}\eta}{\eta \br{L + \frac{6\omega}{n} L_{\max}}},
\end{equation*}
and for 
\begin{equation*}
    \eta = \mu \sbr{L + \frac{6\omega}{n}\br{L_{\max} - \mu}}^{-1}.
\end{equation*}
The last inequality is obtained via minimization of the term $\sbr{1 - \mu\eta\gamma + \frac{6\omega\eta^2}{n} \br{1 - \gamma\mu}}$ w.r.t. $\gamma, \eta$ and using a contraction inequality constraint. Using the definition of the Lyapunov function, we obtain

\begin{equation*}
    \Exp{\Psi^{k+1} \mid x^k, h^k}
    \leq
    \br{1 - \min\left\{\eta, \frac{\alpha}{2} \right\}} \sqn{x^k - x^\star} \Psi^k,
\end{equation*}

which, by unrolling the recursion and taking full expectation, leads to the statement of the Theorem~\ref{thm:vr_gdci}.
\end{proof}

The theorem obtained gives rise to the following iteration complexity result
\begin{equation*}
    \max\left\{2(\omega + 1), \frac{L}{\mu} + \frac{6\omega}{n}\br{\frac{L_{\max}}{\mu} - 1}\right\} \log 1/\varepsilon,
\end{equation*}
which, after simplification ($L_i \equiv L$), is equivalent to the complexity of \DIANA up to numerical constants
\begin{equation*}
    \max\left\{2(\omega + 1), \br{1 + 6\frac{\omega}{n}}\kappa\right\} \log 1/\varepsilon,
\end{equation*}
and improves over the original rate of \algname{VR-GDCI} from \citep{chraibi2019distributed} for fixed point problems and specialized for gradient mappings:
\begin{equation*}
    2 \max\left\{\omega + 1, \kappa \min\br{1, 12\frac{\omega}{n}\kappa}\right\} \log 1/\varepsilon.
\end{equation*}

\section{ADDITIONAL EXPERIMENTS} \label{sec:additional_experiments}

Here, we supplement the Section~\ref{section:exp} with further experimental details and results.

All simulations are performed on a machine with $24 \operatorname{Intel}(\mathrm{R}) \operatorname{Xeon}(\mathrm{R})$ Gold 6246 $\mathrm{CPU}$ @ 3.30 $\mathrm{GHz}$.

\paragraph{\RDIANA stability.}
In this section, we show complementary stability results for \RDIANA plotted in Figure~\ref{fig:rand_diana} for different values of \texttt{Rand-K} parameter $q$.

\paragraph{\RDIANA vs \DIANA on Logistic Regression.}

\begin{figure}[H]
\centering
\begin{subfigure}{\textwidth}
    \includegraphics[width=0.5\linewidth]{{plots/distributed/rand_diana/ridge-synthetic_sparse-q-0.1_vary-p}.pdf}
    \includegraphics[width=0.5\linewidth]{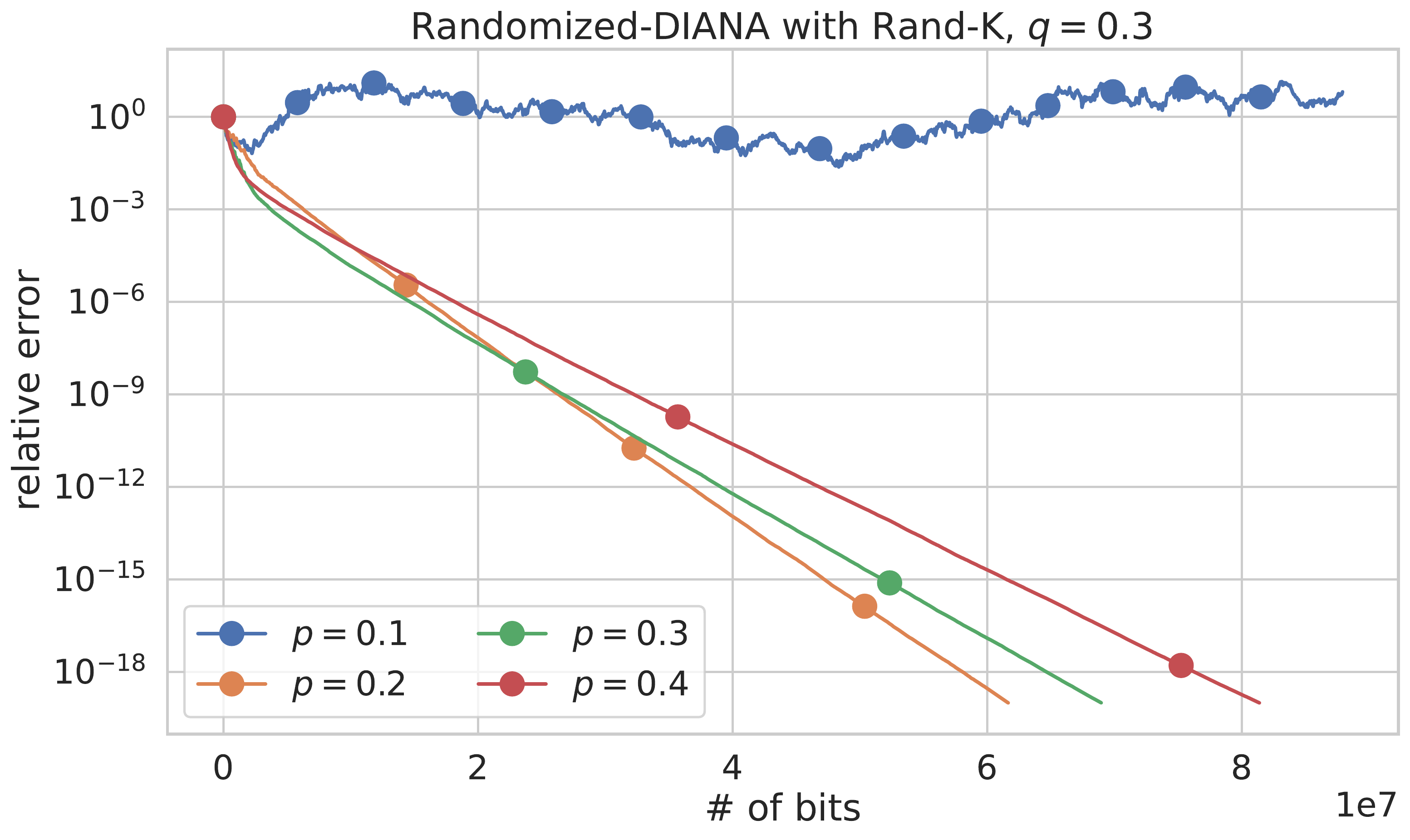}
    \includegraphics[width=0.5\linewidth]{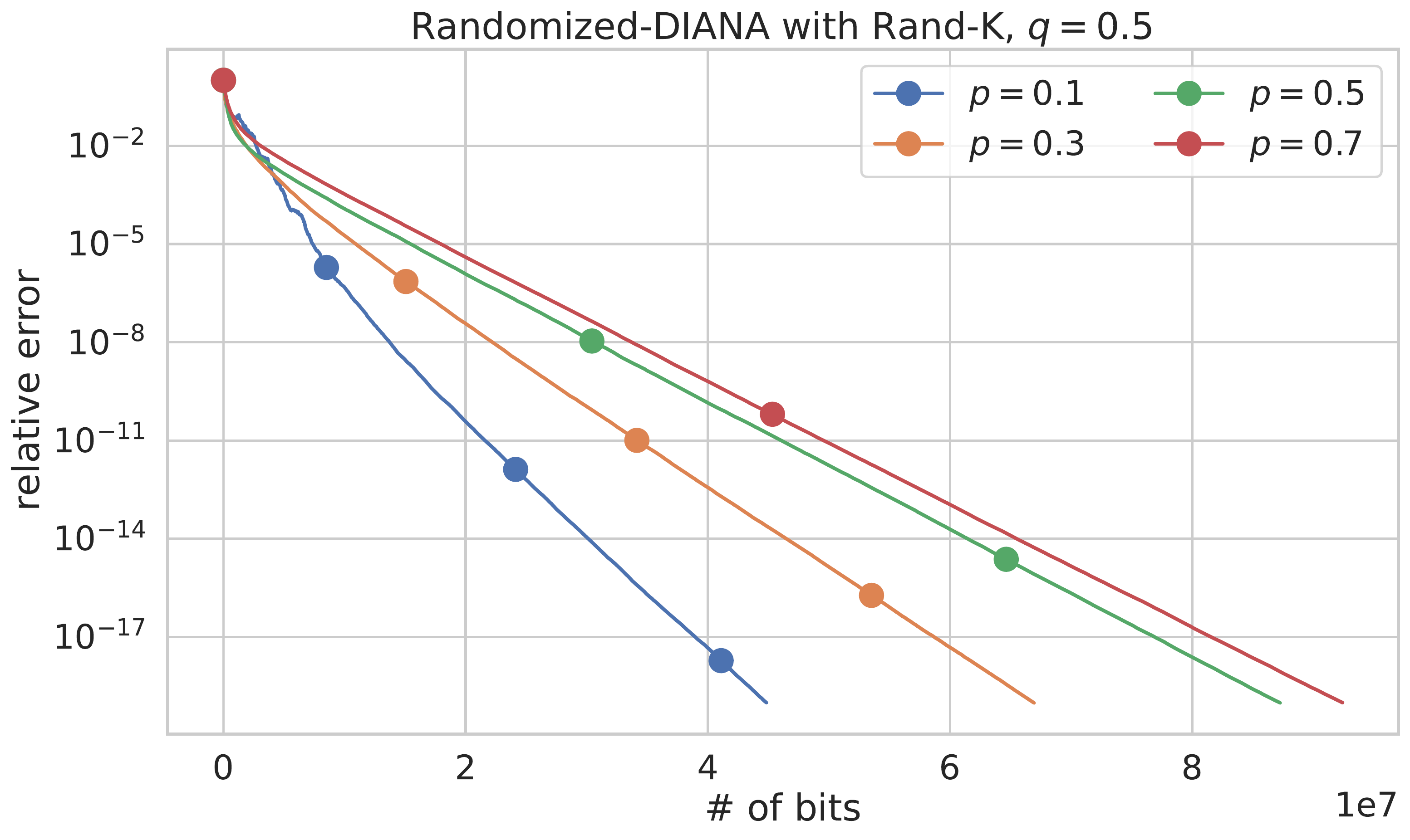}
    \includegraphics[width=0.5\linewidth]{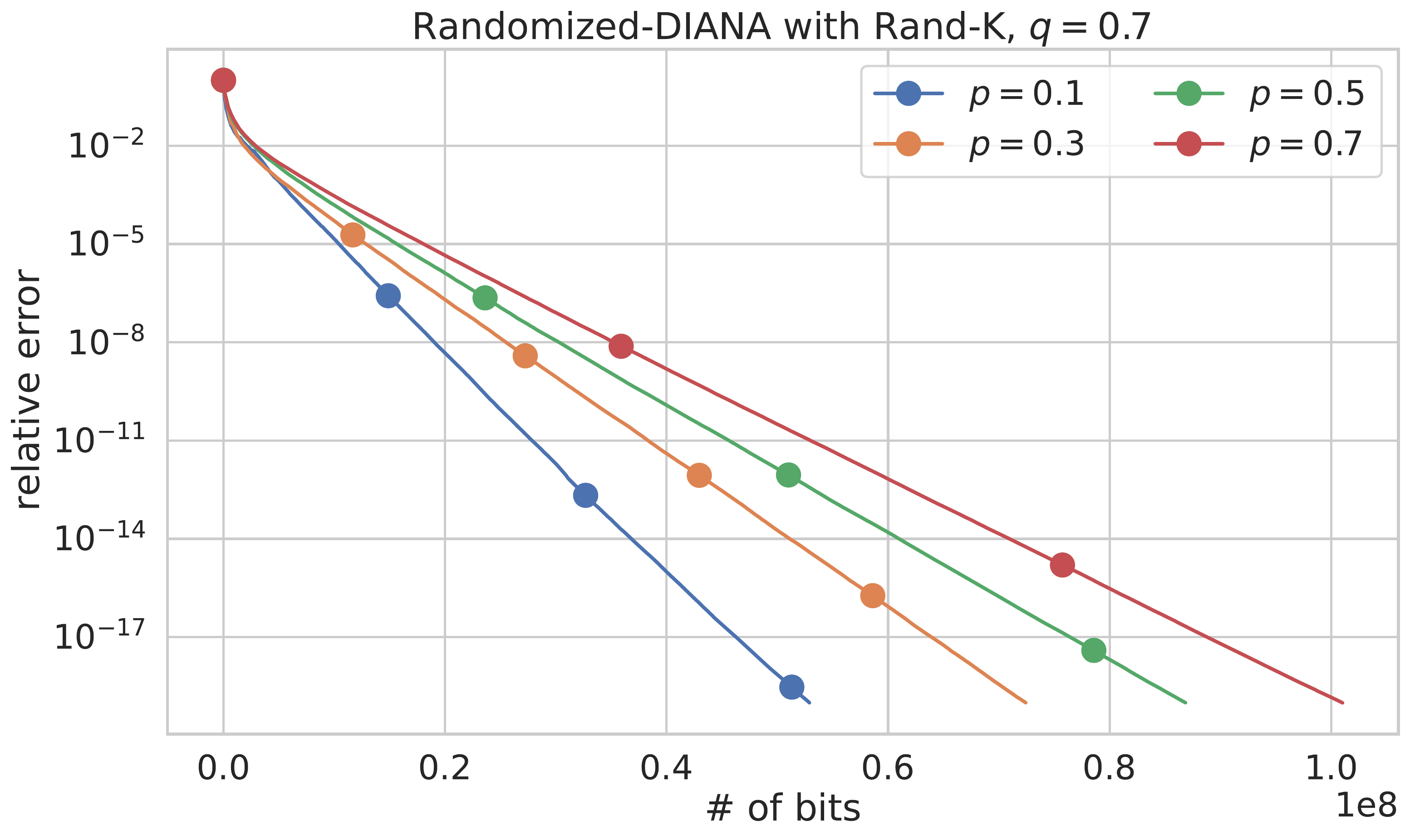}
\end{subfigure}
\caption{Performance of \algname{Rand-DIANA} with the \texttt{Rand-K} sparsification operator for different values of $p$.}
\label{fig:rand_diana_p}
\end{figure}

Consider an $\ell_2$-regularized logistic regression optimization problem with the \textit{w2a} dataset from the LibSVM repository~\citep{chang2011libsvm}.
\begin{equation*}
    f_i(x) \eqdef 
    \frac{1}{m} \sum_{l=1}^{m} \log \left(1+\exp \left(-\mathrm{A}_{i, l}^{\top} x \cdot b_{i, l}\right)\right)+\frac{\lambda}{2}\|x\|^{2}, 
\end{equation*}

where $\lambda$ is set to guarantee that the condition number of the loss function $f$ is equal to 100. $\mathrm{A}_{i, l} \in \mathbb{R}^{d}, b_{i, l} \in\{-1,1\}$ are the feature and label of $l$-th data point
on the $i$-th worker. The “optimum” $x^\star$ is obtained by running \algname{AGD} (Accelerated Gradient Descent) for the whole dataset using one CPU core until $\sqn{\nabla f(x)} \leq 10^{-32}$.

\begin{figure*}[h]
\centering
\begin{subfigure}{\textwidth}
    \includegraphics[width=0.5\linewidth]{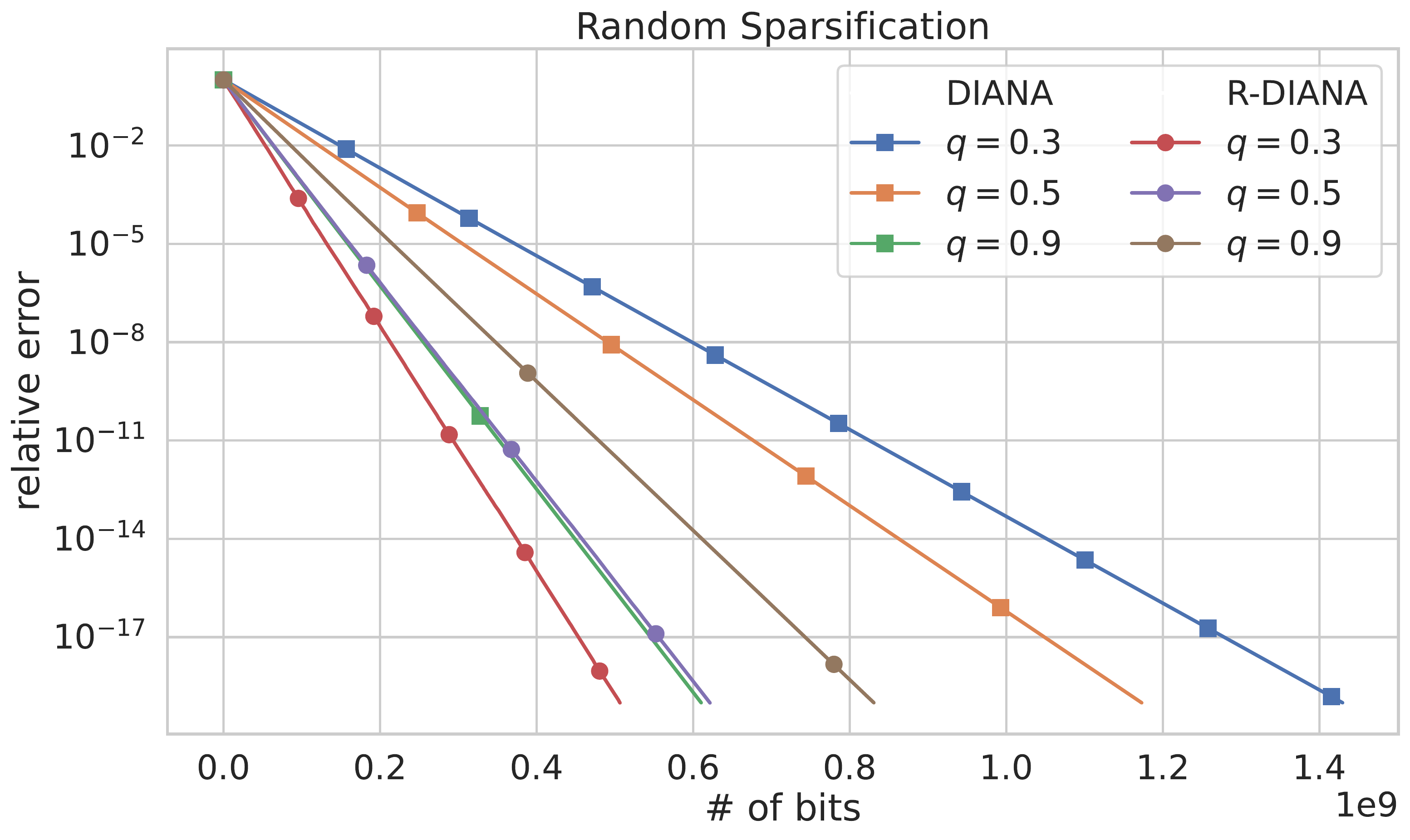}
    \includegraphics[width=0.5\linewidth]{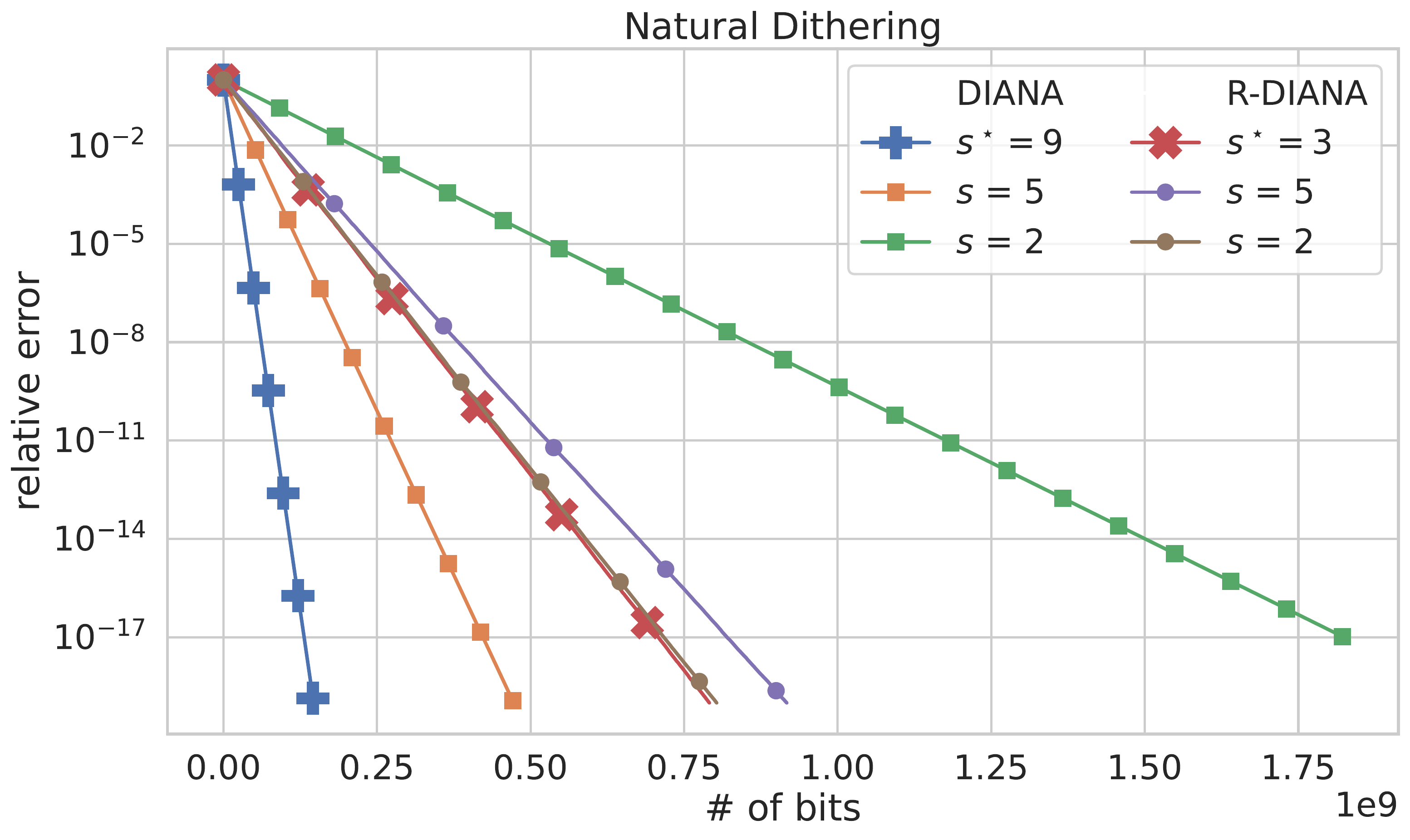}
\end{subfigure}
\caption{Comparison of \DIANA and \algname{Randomized-DIANA}. \textbf{Left plot}: methods equipped with \texttt{Rand-K} for different $q$ values.
\textbf{Right plot}: selected results of the grid search for \texttt{Natural Dithering} parameter $s$ over $\{2, \dots, 20\}$.}
\label{fig:log_reg-r_diana-diana}
\end{figure*}

Conclusions are basically the same as the ones for Ridge Regression problem presented in Section~\ref{exp:r_diana_diana}, although \DIANA performs slightly better with the \texttt{Rand-K} compressor for $q=0.9$.

\end{document}